\documentclass[twoside]{article}

\usepackage[accepted]{aistats2026}

\usepackage{
    amsmath,
    amssymb,
    amsthm,
    booktabs,
    enumitem,
    graphicx,
    natbib,
    xcolor,
    xurl,
    mathtools,
    bm,
}

\usepackage[hyperfootnotes=false]{hyperref}

\usepackage[
    capitalize,
    nameinlink,
    noabbrev,
]{cleveref}

\AddToHook{cmd/appendix/before}{%
    \crefalias{section}{appendix}%
    \crefalias{subsection}{subappendix}%
    \crefalias{subsubsection}{subsubappendix}%
}

\usepackage[
    font=small,
    labelfont=bf,
    labelsep=quad,
    justification=justified,
    singlelinecheck=false,
]{caption}

\usepackage[
    font=small,
    labelfont=bf,
    labelsep=quad,
]{subcaption}

\creflabelformat{equation}{#2\textup{#1}#3}

\hypersetup{
    colorlinks,
    linkcolor={blue!50!black},
    citecolor={blue!50!black},
    urlcolor={blue!50!black},
}

\newcommand{\shorturl}[1]{\href{https://#1}{\nolinkurl{#1}}}

\newcommand{\SEM}[1]{\textnormal{\scriptsize $\pm$\,#1}}
\newcommand{\red}[1]{\textcolor[HTML]{D1495B}{#1}}
\newcommand{\blue}[1]{\textcolor[HTML]{30638E}{#1}}

\newtheorem{proposition}{Proposition}
\newtheorem{corollary}{Corollary}
\newtheorem{theorem}{Theorem}
\newtheorem*{theorem*}{Theorem}
\newtheorem*{corollary*}{Corollary}

\newtheorem*{remark}{Remark}

\theoremstyle{definition}
\newtheorem{definition}{Definition}

\newcommand{\argmin}{\mathop{\mathrm{arg\,min}}}

\newcommand{\entropy}[1]{\mathrm{H} \! \left[ #1 \right]}
\newcommand{\expectation}[2]{\mathbb{E}_{#1} \! \left[ #2 \right]}

\newcommand{\uncertainty}[3]{h_{#1}^{#2} \! \left[ #3 \right]}

\newcommand{\uncertaintyphiw}[1]{h_\phi^w \! \left[ #1 \right]}
\newcommand{\variance}[2]{\mathbb{V}_{#1} \! \left[ #2 \right]}

\newcommand{\E}{\mathbb{E}}

\newcommand{\bbR}[0]{\mathbb{R}}

\newcommand{\calA}[0]{\mathcal{A}}

\newcommand{\calX}[0]{\mathcal{X}}
\newcommand{\calY}[0]{\mathcal{Y}}
\newcommand{\calZ}[0]{\mathcal{Z}}

\newcommand{\EIG}[0]{\mathrm{EIG}}
\newcommand{\EPU}[0]{\mathrm{EPU}}
\newcommand{\EUR}[0]{\mathrm{EUR}}

\begin{document}
    \raggedbottom

    \twocolumn[
        \aistatstitle{Loss-Driven Bayesian Active Learning}
        \aistatsauthor{Zhuoyue Huang \And Freddie Bickford Smith \And Tom Rainforth}
        \aistatsaddress{University of Oxford \And University of Oxford \And University of Oxford}
    ]

    \begin{abstract}
        The central goal of active learning is to gather data that maximises downstream predictive performance, but popular approaches have limited flexibility in customising this data acquisition to different downstream problems and losses.
We propose a rigorous loss-driven approach to Bayesian active learning that allows data acquisition to directly target the loss associated with a given decision problem.
In particular, we show how any loss can be used to derive a unique objective for optimal data acquisition.
Critically, we then show that any loss taking the form of a weighted Bregman divergence permits analytic computation of a central component of its corresponding objective, making the approach applicable in practice.
In regression and classification experiments with a range of different losses, we find our approach reduces test losses relative to existing techniques.
    \end{abstract}

    \section{INTRODUCTION}\label{sec:introduction}

In machine learning we commonly use a loss function (or equivalently a reward or utility function) to quantify a model's performance \citep{murphy2022probabilistic}.
Formally a loss function encodes preferences over the outcomes of a given decision \citep{vonneumann1947theory}.
It could for instance measure the accuracy of a prediction, the impact of a medical treatment on a patient's health, or the return on a financial investment.

The diversity of losses in use across the field reflects the variety of decision problems we want to solve.
For example, predictive decision problems vary in the significance assigned to different prediction errors.
In protein design \citep{notin2024machine} we might care more about accurately predicting a desirable property (eg, binding affinity) when it has a high value, and in safety-critical settings \citep{roy1952safety} we might want to penalise optimistic predictions more than pessimistic ones.

\looseness=-1
We argue that rigorously tailoring active learning to this breadth of possible decision problems is not possible with the approaches most popularly used in the field.
In particular, while a range of data-acquisition objectives have been proposed \citep{li2024survey,settles2012active}, in general these either have limited flexibility in targeting different downstream losses or rely on nested training of the downstream model within the acquisition function, meaning they can only be used with particular models or substantial approximations.
For example, information-theoretic objectives \citep{bickfordsmith2023prediction,mackay1992information} and Bayesian variants of popular variance-based objectives \citep{cohn1993neural,cohn1994active} are fixed in the losses they target (log loss and quadratic loss respectively), while popular error-based objectives \citep{roy2001toward} allow for different losses but are heavily restricted in the models they can be practically used with.
There is thus a pressing need for flexible, principled methods that directly optimise downstream loss in a range of decision problems without model restrictions.

\looseness=-1
We address this by proposing an explicitly loss-driven approach to active learning based on Bayesian decision theory \citep{berger1985statistical,ramsey1926truth,savage1954foundations}.
Specifically, we revisit the decision-theoretic foundations of Bayesian experimental design \citep{degroot1962uncertainty,lindley1972bayesian,raiffa1961applied} to show how data utility can be formalised as the Bayesian expected loss of a Bayes-optimal downstream action informed by the acquired data.
This then reveals that any model-loss pairing defines a unique objective whose minimiser achieves theoretically optimal data acquisition.

Objectives derived from our framework are not practically usable in the fully general case: they comprise an expectation of a minimisation of an inner expectation, which is rarely computationally viable to directly optimise.
However, we show that any loss that can be written as a weighted Bregman divergence \citep{bregman1967relaxation} allows us to compute the inner minimisation analytically, such that the overall objective can be simplified to a form amenable to practical estimation and optimisation.
Given that this broad class of losses includes many losses used in practice, this then yields a powerful and highly applicable class of loss-driven Bayesian active learning methods.

To help demonstrate the performance benefits of our approach, we introduce two concrete data-acquisition objectives with prediction-space weighting: one analogous to information-based objectives \citep{bickfordsmith2023prediction,bickfordsmith2024making,mackay1992evidence,mackay1992information} and one analogous to Bayesian variants of variance-based objectives \citep{cohn1993neural,cohn1994active}.
On classification and regression problems these loss-matched objectives improve the corresponding (weighted) test losses, and a same-task comparison across downstream losses (quadratic vs Linex) highlights the cost of mismatch.

    \section{BACKGROUND}\label{sec:background}

\textbf{Bayesian decision theory} \citep{berger1985statistical,ramsey1926truth,savage1954foundations} provides a rigorous framework for choosing an action, $a \in \calA$, under imperfect knowledge of a world state, $z \in \calZ$.
A decision-maker's loss function, $\ell:  \calZ \times \calA \to \bbR$, encodes their preferences over outcomes, and their model, $p(z)$, encodes their beliefs about the world state.
Any given action can be judged in terms of its Bayesian expected loss,
\begin{align}
    \label{eq:general_expected_loss}
    L(a)
    =
    \expectation{p(z)}{\ell(z,a)}
    .
\end{align}
A Bayes-optimal action minimises this expected loss:
\begin{align}
    \label{eq:bayes_optimal_action}
    a^*
    =
    \argmin_{a \in \calA} L(a)
    .
\end{align}

\looseness=-1
\textbf{Bayesian experimental design (BED)} \citep{degroot1962uncertainty,degroot1970optimal,lindley1956measure,lindley1972bayesian,raiffa1961applied,raiffa1968decision} applies Bayesian decision theory to the problem of designing experiments.
The design, $\xi$, of an abstractly defined experiment represents controllable aspects of the data generation, and our choice of $\xi$ affects the data, $d$, we will observe.
In the original formulation of BED~\citep{lindley1956measure}, designs are judged in terms of the expected information gain (EIG) in some quantity of interest, $\psi$, from observing $d$:
\begin{align}
    \label{eq:eig}
    \EIG_\psi(\xi)
    =
    \expectation{p(d; \xi)}{\entropy{p(\psi)} - \entropy{p(\psi | d; \xi)}},
\end{align}
where $\entropy{\cdot}$ denotes Shannon entropy \citep{shannon1948mathematical}.
Here and elsewhere we use $p(a; b)$ to denote a distribution over $a$ that depends on $b$ without necessarily being the conditional distribution, $p(a | b)$, associated with an explicit joint distribution, $p(a, b)$.

BED more generally involves choosing a data policy, $\pi_d$, that minimises a Bayesian expected loss of the form
\begin{align*}
    L_\mathrm{BED}(\pi_d)
    =
    \expectation{p(d, \psi; \pi_d)}{\ell_d(\pi_d, d, \psi)}
    .
\end{align*}
Here $\pi_d$ could represent a fixed sequence of designs or a more complex decision-making procedure that adaptively selects designs based on the experiment history \citep{foster2021deep,huan2016sequential}.
Meanwhile $\ell_d$ is defined not in terms of downstream actions but in terms of the data policy and the data it yields, along with the quantity of interest.
Determining what $\ell_d$ should be can be challenging; the predominant approach is to use $\ell_d(\pi_d, d, \psi) = \log p(\psi) - \log p(\psi | d; \pi_d)$, recovering the EIG \citep{rainforth2024modern}.

\looseness=-1
\textbf{Bayesian active learning} \citep{bickfordsmith2023prediction,bickfordsmith2024making,gal2017deep,houlsby2011bayesian,kirsch2019batchbald,mackay1992evidence,mackay1992information} is the application of BED to sequentially selecting data to use in training a predictive model for outputs, $\tilde{y} \in \calY$ given inputs $\tilde{x} \in \calX$.
Typically each design, $\xi$, corresponds to choosing one or more inputs, $\xi \in \mathcal{X}^n$, and the data corresponds to the corresponding observed output label(s).
This process is then iterated, updating the model with the new data before again choosing new inputs to label.
Typically the inputs for labelling are selected from some large \emph{pool} of unlabelled examples.

The most popular approaches use myopic information-theoretic data acquisition, each step of which involves choosing $\xi$ to maximise $\EIG_\psi(\xi)$, namely the expected uncertainty reduction in $\psi$ from performing a Bayesian update on $\psi$ given the observed $d$, with uncertainty measured by Shannon entropy (\Cref{eq:eig}).
If $\psi = \theta$ represents a set of stochastic model parameters then the EIG is known as the BALD score \citep{houlsby2011bayesian}; if instead $\psi = (\tilde{x}, \tilde{y})$ represents a downstream input-output pair then we get the expected predictive information gain \citep[EPIG;][]{bickfordsmith2023prediction}.

The \textbf{variance-based objective} introduced by \citet{cohn1993neural} similarly corresponds to an expected reduction in uncertainty, but with uncertainty measured by variance rather than Shannon entropy and with generic belief updating on new data rather than a Bayesian update.
With $p(\tilde{x})$ denoting beliefs over downstream inputs, the expected variance reduction (EVR) is
\begin{align*}
    \mathrm{EVR}(\xi)
    =
    \expectation{p(d; \xi)p(\tilde{x})}{\variance{p(\tilde{y}; \tilde{x})}{\tilde{y}} - \variance{p(\tilde{y}; \tilde{x}, d)}{\tilde{y}}}
    .
\end{align*}

The \textbf{expected future error} (EFE) framework introduced by \citet{roy2001toward} is based around measuring the future expected classification loss of predictions on the unlabelled pool data, $\mathcal{X}_p$.
Specifically, they consider retraining the model on a hypothetical new datapoint, $(x,y)$, for each $x\in\mathcal{X}_p$ and all possible class labels, $y\in\mathcal{Y}$, evaluating some loss function for the updated model's predictions on all other points in the pool, and then choosing the input that gives the lowest average loss in expectation over possible labels.
That is, they choose the $x\in\mathcal{X}_p$ that minimises
\begin{align*}
    \mathrm{EFE}(x) = \expectation{p(y;x)}{
        \sum_{\tilde{x}\in \mathcal{X}_p} \frac{\expectation{p(\tilde{y};\tilde{x},x,y)}{\ell (p(\tilde{y};\tilde{x},x,y),\tilde{y})}}{{|\calX_p|}}
    },
\end{align*}
\looseness=-1
where $p(y;x)$ is based on the current model, each $p(\tilde{y};\tilde{x},x,y)$ is a retrained model with the new hypothetical data, and $\ell$ is the assumed classification loss.

In principle, their framework is applicable to any downstream loss defined on the same space as their training data.
However, it requires a nested retraining of the model for each possible $(x,y)$, meaning it is not generally applicable in practice.
Indeed, their specific implementations, which focus on log loss and zero-one loss, rely on using models with cheap incremental updates, along with additional approximations.

A \textbf{Bregman divergence} \citep{bregman1967relaxation} between elements in a convex set, $x, y \in \Omega \subseteq \bbR^K$, is defined as
\begin{align*}
    D_\phi(x, y)
    =
    \phi(x) - \phi(y) - \langle \nabla \phi(y), x - y \rangle,
\end{align*}
where $\phi: \Omega \to \bbR$ is a differentiable and strictly convex potential function, and $\langle \cdot , \cdot \rangle$ denotes an inner product.
It is nonnegative, typically asymmetric in $x$ and $y$ (and thus is not a metric), and vanishes if and only if $x = y$.

Two important examples of Bregman divergences are Mahalanobis distance and KL divergence \citep{kullback1951information}.
Specifically, setting $\Omega = \bbR^K$ and $\phi(x) = x^\top A x$ for $A \succ 0$, yields $D_\phi(x, y) = (x - y)^\top A (x - y)$~\citep{banerjee2005clustering}, which reduces to the squared error when $A = I_K$ is the $K$-dimensional identity matrix.
Meanwhile, setting $\Omega = \Delta^{K-1}$ and $\phi(x) = \sum_{j=1}^K x_j \log x_j$ gives the KL divergence,  $D_\phi(x, y) = \sum_{j=1}^K x_j (\log x_j - \log y_j)$, which reduces to the negative log likelihood (NLL), $-\log y_i$, when $x = e_i$ is the $i$th standard basis vector.

Notably the results we present apply not just to Bregman divergences between finite-dimensional vectors, as above, but also to Bregman divergences between functions \citep{frigyik2008functional}.
This is practically relevant in many machine-learning settings.
For example, if we are working with probability densities then we require a functional Bregman divergence to extend the KL divergence from above to this setting \citep{csiszar2012generalized}.
We provide details in \Cref{sec:functional_bregman}.

In probabilistic prediction, the regret of a differentiable strictly proper scoring rule is exactly a Bregman divergence of the associated generalised entropy, so Bregman geometry is the canonical regret geometry of proper probabilistic evaluation. See \cref{app:proper} for further discussion on the links between proper scoring rules and Bregman divergences.

    \section{LOSS-DRIVEN ACQUISITION}\label{sec:method}

The approaches to Bayesian experimental design and active learning presented in \Cref{sec:background} have a fundamental shortfall: they are difficult to customise to different decision problems that we might want to solve using the data we are acquiring.
In particular, it is not immediately clear how to align the loss on acquired data with our ultimate goal of minimising loss on ``terminal'' (downstream) actions.

To address this, we now revisit BED from the perspective of decision-making and prediction.
Specifically, by revisiting the Bayesian decision theory that underlies BED, we identify a loss-driven approach to data acquisition that in the special case of a negative-log-likelihood loss is equivalent to EIG maximisation.

\subsection{Bayes-optimal acquisition}\label{sec:bayes_data_acquisition}

We consider the end-to-end problem of acquiring data and then using that data to take a terminal action.
To this end, we first consider the optimal downstream action if data $d$ is acquired, then work backwards to figure out the best way to gather data from this.

Assume we have some terminal loss function, $\ell: \calZ \times \calA \to \bbR$, that depends on our terminal action, $a \in \mathcal{A}$, and true world state, $z\in\mathcal{Z}$ (in principle the loss can also directly depend on $(d,\pi_d)$, but we will omit such dependency for simplicity).
Further, let our model's belief over $z$ after it has observed some hypothetical data, $d$, be given by $p(z|d;\pi_d) \propto p(z)p(d|z;\pi_d)$, where $p(z)$ encodes our model’s current beliefs and $p(d|z;\pi_d)$ is our predictive model for new data.
The downstream Bayes-optimal action is now given by
\begin{align}
\label{eq:bayes_act}
    a^* = \argmin_{a \in \calA} \expectation{p(z | d; \pi_d)}{\ell(z, a)}.
\end{align}
The Bayesian expected loss associated with this Bayes-optimal action corresponds to a \emph{generalised entropy}~\citep{bickfordsmith2025rethinking,dawid1998coherent}:
\begin{align}
\label{eq:gen_ent}
    h_{\ell}[p(z|d;\pi_d)] = \min_{a \in \calA} \expectation{p(z | d; \pi_d)}{\ell(z, a)}.
\end{align}
This provides a \emph{loss-driven} measure of uncertainty on our posterior beliefs $p(z|d;\pi_d)$.
It represents the best expected performance we can achieve given the data we have managed to acquire and the data policy we used.
Better data provides more information about the state of the world, $z$, reducing our uncertainty and, in turn, allowing for better downstream actions.

Moreover, sequential application of Bayesian decision theory dictates that we should assume downstream actions are made Bayes-optimally when making earlier decisions to preserve coherence~\citep{lindley1972bayesian}.
Thus, from a Bayesian decision theory perspective, the generalised entropy in~\Cref{eq:gen_ent} is \emph{the} canonical measure we should use to assess how effective our data acquisition policy has been once the data has been observed.

From here it is easy to see that the Bayes-optimal data gathering policy for a given model-loss pairing and allowed space of policies $\Pi_d$
is
\begin{align}
    &\pi_d^* = \argmin_{\pi_d \in \Pi_d}
    \EPU_{p(d, z; \pi_d)}^{\ell} \nonumber \\
    \mathrm{where} \quad &\EPU_{p(d, z; \pi_d)}^{\ell} = \expectation{p(d;\pi_d)}{h_{\ell}[p(z|d;\pi_d)]} 
    \label{eq:data_objective_explicit_min}
\end{align}
with $p(d;\pi_d) = \expectation{p(z)}{p(d|z;\pi_d)}$. 
Here $\EPU_{p(d, z; \pi_d)}^{\ell}$ denotes the \emph{expected posterior uncertainty} (EPU) in $z$ for a given loss function, $\ell$, and a given joint model over $d$ and $z$, $p(d, z; \pi_d)$.
If our loss depends only on $(z, a)$ and not directly on $(\pi_d,d)$, then this is equivalent to the expected uncertainty reduction (EUR),
$$\EUR_{p(d, z; \pi_d)}^{\ell} = \uncertainty{\ell}{}{p(z)} - \EPU_{p(d, z; \pi_d)}^{\ell}.$$
Equivalent notions to the EPU and EUR have been considered in past work \citep{bickfordsmith2025rethinking,dawid1998coherent,huang2024amortized,neiswanger2022generalizing}; the most recent of these relaxed the form of the future beliefs to be potentially non-Bayesian.
While principled, the definition of the generalised entropy involves a minimisation, creating a nested problem where we must minimise the expectation of a minimisation of an inner expectation that is itself with respect to an intractable posterior.
Using it as a practical active learning objective thus further requires a mechanism to get around this nested minimisation.

\begin{table*}[t]
    \centering
    \small
    \begin{tabular}{llllll}
    \toprule
    Name & Example & $\calZ$ & $T(z)$ & $\phi(z)$ & $D_\phi(T(z), T(b))$ \\
    \midrule
    Squared error & \citet{berger1985statistical} & $\bbR$ & $z$ & $z^2$ & $(z-b)^2$ \\
    Box-Cox squared error & \citet{box1964analysis} & $\bbR^+$ & $(z^\lambda - 1) / \lambda, \lambda \neq 0$ & $z^2$ & $(z^\lambda - b^\lambda)^2 / \lambda^2$ \\
    Linex & \citet{zellner1986bayesian} & $\bbR^+$ & $\exp(-\alpha z), \alpha > 0$ & $-\log z$ & $\exp(\alpha(b - z)) - \alpha(b - z) - 1$ \\
    \bottomrule
    \end{tabular}
    \caption{
        Standard predictive losses can be expressed as a Bregman divergence in a transformed space, $D_\phi(T(z), T(b))$, where $z \in \calZ$ is a world state, $b\in\calZ$ is an in-space action, $\phi$ is a convex function and $T: \calZ \to \mathrm{dom}(\phi)$ is measurable.
        \vspace{-10pt}
    }
    \label{tab:bregman_losses}
\end{table*}

\subsection{Using Bregman-divergence losses}\label{sec:bregman_losses}

Our key insight is now to show that the EPU defined in~\Cref{eq:data_objective_explicit_min} is in fact a viable objective for practical active learning approaches for a wide range of losses.
Specifically, if our loss takes the form
\begin{align}
    \label{eq:weighted_bregman_loss}
    \ell(z, a)
    =
    w(z) D_\phi(T(z), a),
\end{align}
where $w: \calZ \to \bbR_+$ is a weighting function, $D_\phi$ is the Bregman divergence associated with some strictly convex differentiable potential function, $\phi: \Omega \to \bbR$, and $T: \calZ \to \mathrm{dom}(\phi)$ is measurable, then the required inner minimisation can be performed analytically, leading to the following core result.\footnote{Our results generally extend to cases where the loss depends directly on $(\pi_d,d)$, provided each possible $(\pi_d,d)$ realisation produces a loss that takes the form in~\Cref{eq:weighted_bregman_loss} (potentially with different $w$, $\phi$, and/or $T$).}

\vspace{5pt}

\begin{theorem}
\label{the:main_theorem}
Assume the terminal loss takes the form of a weighted Bregman divergence on measurable transformations of the world state as per~\Cref{eq:weighted_bregman_loss}, and that $\expectation{q(z)}{w(z)}$, $\expectation{q(z)}{w(z)T(z)}$ and $\expectation{q(z)}{w(z)\phi(T(z))}$ are finite for some generic distribution $q(z)$. If $\mathcal{A} \subseteq \mathrm{ri}(\mathrm{dom}(\phi))$ is convex and contains $\expectation{q_w(z)}{T(z)}$ where  $q_w(z) \!=\! w(z) q(z) / \bar{w}_q$ and $\bar{w}_q \!=\! \expectation{q(z)}{w(z)}$, then the generalised entropy associated with this $q$ is given by
\begin{align}
    & \uncertainty{\phi,T}{w}{q(z)} := \min_{a \in \calA} \expectation{q(z)}{w(z) D_\phi(T(z), a)}
    \nonumber
    \\
    & =
    \bar{w}_q \left( \expectation{q_w(z)}{\phi(T(z))} - \phi(\expectation{q_w(z)}{T(z)}) \right).
    \label{eq:analytic_uncertainty}
\end{align}
Assuming that $p(z|d;\pi_d)$ satisfies the above restrictions of $q$ for all $(d,\pi_d)$,
then the Bayes-optimal data gathering policy for our model-loss pairing is given by
\begin{align}
\label{eq:opt_policy_breg}
\begin{split}
    \pi_d^\ast = \argmin_{\pi_d\in\Pi_d} ~\bar{w}\,\mathbb{E}_{p_w(d;\pi_d)}&\Big[
        \expectation{p_w(z | d;\pi_d)}{\phi(T(z))} \\ -& \phi(\expectation{p_w(z | d;\pi_d)}{T(z)}) \Big],
\end{split}
\end{align}
where we define beliefs $p_w(d;\pi_d)=\bar w(d,\pi_d)p(d ; \pi_d) /\bar w$, $p_w(z | d ; \pi_d)  =  w(z) p(z | d ; \pi_d) / \bar{w}(d, \pi_d)$, and weights $\bar{w}(d,\pi_d)  =  \expectation{p(z | d ; \pi_d)}{w(z)}$, $\bar w = \expectation{p(z)}{w(z)}$.
\end{theorem}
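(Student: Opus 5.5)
The plan is to first reduce the weighted problem to an unweighted Bregman minimisation under the reweighted distribution $q_w$, then apply the classical mean-minimiser property of Bregman divergences, and finally substitute the result into the EPU of \Cref{eq:data_objective_explicit_min}.

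\textbf{Reduction to $q_w$.} Since $w \ge 0$ and $\bar{w}_q$ is finite, for any $a \in \calA$ we have
\begin{align*}
\expectation{q(z)}{w(z) D_\phi(T(z), a)} = \bar{w}_q\, \expectation{q_w(z)}{D_\phi(T(z), a)}.
\end{align*}
The case $\bar{w}_q = 0$ is trivial, because both sides of \Cref{eq:analytic_uncertainty} vanish. Next I expand the Bregman divergence as $\phi(T(z)) - \phi(a) - \langle \nabla\phi(a), T(z) - a\rangle$. The finiteness assumptions on $\expectation{q}{w\phi(T)}$ and $\expectation{q}{wT}$ ensure each term is $q_w$-integrable. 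Writing $\mu = \expectation{q_w(z)}{T(z)}$, I will verify the standard three-point identity
\begin{align*}
\expectation{q_w(z)}{D_\phi(T(z), a)} = \expectation{q_w(z)}{D_\phi(T(z), \mu)} + D_\phi(\mu, a).
\end{align*}
This follows by adding and subtracting $\phi(\mu)$. The cross term $\langle \nabla\phi(\mu), \expectation{q_w}{T} - \mu\rangle$ vanishes by linearity of expectation. For this step I need $\mu \in \mathrm{ri}(\mathrm{dom}(\phi))$, which holds because $\mu \in \calA \subseteq \mathrm{ri}(\mathrm{dom}(\phi))$, so that $\nabla\phi(\mu)$ is defined. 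Because $D_\phi(\mu, a) \ge 0$ with equality iff $a = \mu$, and $\mu \in \calA$ by assumption, the minimiser over $\calA$ is exactly $a^* = \mu$. The minimum value is $\expectation{q_w}{D_\phi(T(z), \mu)} = \expectation{q_w}{\phi(T(z))} - \phi(\mu)$, again using that the linear term has zero mean. Multiplying by $\bar{w}_q$ gives \Cref{eq:analytic_uncertainty}.

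\textbf{Substitution into the EPU.} I apply this result with $q(z) = p(z|d;\pi_d)$ for every $(d, \pi_d)$, which is licensed by the stated assumption. This gives
\begin{align*}
h_\ell[p(z|d;\pi_d)] = \bar{w}(d,\pi_d)\bigl(\expectation{p_w(z|d;\pi_d)}{\phi(T(z))} - \phi(\expectation{p_w(z|d;\pi_d)}{T(z)})\bigr).
\end{align*}
Taking the expectation over $p(d;\pi_d)$, I write $\bar{w}(d,\pi_d)\,p(d;\pi_d) = \bar{w}\, p_w(d;\pi_d)$. It remains to check that $p_w(d;\pi_d)$ is normalised. By the tower property,
\begin{align*}
\expectation{p(d;\pi_d)}{\bar{w}(d,\pi_d)} = \expectation{p(z)}{w(z)} = \bar{w},
\end{align*}
since the marginal of $z$ under $p(d,z;\pi_d)$ is $p(z)$. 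This yields exactly the objective in \Cref{eq:opt_policy_breg}, and the $\argmin$ statement is then immediate from \Cref{eq:data_objective_explicit_min}.

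\textbf{Main obstacle.} I expect the main difficulty to be measure-theoretic rigour in the general (possibly functional) setting rather than the algebra. Two points need care. First, exchanging $\langle \nabla\phi(a), \cdot\rangle$ with the expectation requires integrability of $T$ under $q_w$ and, in infinite dimensions, a Bochner or Gateaux interpretation; I would defer the functional case to \Cref{sec:functional_bregman}. Second, the quantities must remain finite at the minimiser. Here I would rely on $\mu \in \mathrm{ri}(\mathrm{dom}(\phi))$ to guarantee that $\phi(\mu)$ is finite and that $\phi$ is differentiable at $\mu$.
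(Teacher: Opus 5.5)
Your proposal is correct and follows essentially the same route as the paper. Both reweight to $q_w$, identify the minimiser as $\mathbb{E}_{q_w(z)}[T(z)]$, expand the Bregman divergence so that the linear term vanishes, and then substitute $q = p(z|d;\pi_d)$, absorbing $\bar w(d,\pi_d)$ into $p_w(d;\pi_d)$.

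The only difference is that you prove the mean-minimiser property inline via the three-point identity instead of citing the Banerjee et al.\ proposition, and you also check that $p_w(d;\pi_d)$ normalises. This has a minor benefit: your argument only needs $\phi$ to be differentiable at $a$ and $\mu$, so it also covers boundary values of $T(z)$ such as the one-hot $e_z$, which the cited proposition (stated for an open set) does not strictly cover.
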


This result (see \Cref{app:proof_uncertainty_epu_general_form} for proof) means \Cref{eq:opt_policy_breg} is now something that we can realistically target for data acquisition, as it no longer requires us to perform a nested optimisation within the acquisition function (though it is still in general a nested expectation~\citep{rainforth2018nesting}).
The implications of this are considerable because \Cref{eq:weighted_bregman_loss} generalises a number of standard predictive losses. In addition to the KL divergence (reducing to NLL) and Mahalanobis distance (reducing to squared error) mentioned in \Cref{sec:background}, some further examples are presented in \Cref{tab:bregman_losses}.

\vspace{5pt}
\looseness=-1
\begin{remark}
The Bayes-optimal action when using a loss of the form~\Cref{eq:weighted_bregman_loss} with constant weight function $w$ is given by the posterior mean of $T(z)$.
Moreover, this result goes both ways: under mild regularity conditions, if our Bayes act is a posterior mean, this itself implies that our loss function must be a Bregman divergence loss of the form of~\Cref{eq:weighted_bregman_loss} up to additive constant terms~\citep{banerjee2005optimality}.
\end{remark}

\looseness=-1
The above remark explains why Bregman-type losses are not only numerically convenient, but also a natural fit for constructing a loss-driven active learning framework: they align with mean-oriented decision semantics and provide the analytic structure needed for tractable acquisition.
The inclusion of the weighting factor, $w(z)$, increases the flexibility of the framework, allowing us to target scenarios where the optimal act is not to use the mean under our beliefs because of asymmetry in losses incurred by different errors (e.g.~penalising under-prediction more than over-prediction).

The transformed-space formulation also gives a unified treatment of finite categorical targets: taking $T(z)=e_z$ for a true class label, $z$, maps labels to the simplex, so the Bayesian act is the class-probability vector, $\expectation{p(z)}{e_z}=p\in\bbR^K$, and the one-hot case follows from the same computation rather than requiring a separate construction.
For example, using $\phi(x) = \sum_{i=1}^K x_i\log x_i$ and $T(z)=e_z$ allows us to recover Shannon entropy as our $h_\ell$ and thus an EUR that corresponds to the EIG in $z$.

More broadly, the transformation enlarges the class of tractable Bregman-type losses because, for data acquisition, we only need the corresponding uncertainty, $\uncertainty{\phi}{}{\cdot}$, to evaluate the acquisition function, not necessarily a minimiser that lies in the original world space $\calZ$. When such an in-space minimiser is required, i.e., $z,b\in\calZ$, one can instead use the pullback loss $D_\phi(T(z),T(b))$, and, assuming $T$ is injective and $T(\calZ)$ is convex, recover the explicit Bayesian act $b^* = T^{-1}(\expectation{p(z)}{T(z)})$ (see \cref{tab:bregman_losses} for examples).

\subsection{Focusing on active learning}\label{sec:active_learning}

Now we consider an active-learning setting in which $z$ represents an output variable of interest, $\pi_d = x$ represents an input for labelling and $d = y$ represents the label of that input.
We introduce the notion of a context variable, $c$, that relates to the output we want to predict; it could for example represent a test input.

The setup we have considered so far implicitly allows for transductive learning \citep{vapnik1982estimation}, in which $c$ is known upfront: $c$ can be used to inform our joint beliefs over $y$ and $z$, and it does not need to be explicitly denoted.
Our focus is instead going to be on settings where $c$ is unknown at the time of data acquisition but known at the time of choosing a terminal action.

\looseness=-1
Extending to this new setting simply requires us to define our joint model to incorporate the unknown $c$, leading to a new form of EPU.
We define our joint model to be $p(c, y, z | x) = p(c) p(z | c) p(y | c, x, z)$ and, assuming $p(y | x, c) = p(y | x)$ for simplicity, this gives us 
\begin{align*}
    \EPU_{p(c, y, z | x)}^{\phi,T,w}
    =
    \expectation{p(c) p(y | x)}{\uncertainty{\phi,T}{w}{p(z | c, x, y)}}
    ,
\end{align*}
which is a function of the $x$ that we want to choose.
As noted in \Cref{sec:bayes_data_acquisition}, we can equivalently minimise this EPU or maximise a corresponding EUR, given by 
\begin{align}
    \EUR_{p(c, y, z | x)}^{\phi,T,w}
     =
    \expectation{p(c)}{\uncertainty{\phi,T}{w}{p(z | c)}} - \EPU_{p(c, y, z | x)}^{\phi,T,w}
    \nonumber
    ,
\end{align}
which is the negative EPU plus a term constant in $x$.

It is worth noting a caveat around using these forms of EPU and EUR defined in terms of a single step of data acquisition.
This follows the standard practice in Bayesian active learning to repeatedly use a single-step acquisition objective to choose a sequence of inputs for labelling (\Cref{sec:background}).
However, it means it does not conform to true Bayesian optimality as per~\Cref{eq:opt_policy_breg}, which would require us to go through the expensive process of fully planning ahead through all future data acquisitions as would be required, e.g.~using deep adaptive design~\citep{foster2021deep}.
Our results show it can nevertheless perform well and provide a practical loss-driven Bayesian active learning approach.

\subsection{Two concrete objectives}\label{sec:concrete_objectives}

We next use our setup to identify two new example objectives from our framework that are related to existing information-based and variance-based objectives.
We obtain the first objective using a weighted NLL loss, produced by $T(z)=e_z$ and $\phi(x)=\sum_{i=1}^K x_i \log x_i$:
\begin{align*}
    \ell_\mathrm{WNLL}(z,a)
    =
    - w(z) \log a_z
\end{align*}
where $a_z$ denotes the probability mass assigned to $z$ by the distribution that $a$ represents.
The resulting uncertainty for a generic $q(z | c)$ is $\uncertainty{\mathrm{WNLL}}{}{q(z | c)} = \bar{w}(c) \entropy{q_w(z | c)}$ where we define $\bar{w}(c)$ and $q_w(z | c)$ as in \Cref{eq:analytic_uncertainty} except with $q(z)$ replaced by $q(z | c)$.
The corresponding EUR is a ``weighted EPIG'',
\begin{align*}
  &\mathrm{EPIG}_w(x) =\\  &\quad\expectation{p(c) p(y | x)}{\bar{w}(c)\entropy{p_w(z | c)} - \bar{w}(c,x,y)\entropy{p_w(z | c, x, y)}}
\end{align*}
which is equal to EPIG if $w(z) = 1$ for all $z$.

Next we use a weighted squared error:
\begin{align*}
    \ell_\mathrm{WSE}(z,a)
    =
    w(z) (z-a)^2
    .
\end{align*}
Similar to before, our uncertainty is $\uncertainty{\mathrm{WSE}}{}{q(z | c)} = \bar{w}(c) \variance{q_w(z | c)}{z}$, which leads to a ``weighted EVR'',
\begin{align*}
    &\mathrm{EVR}_w(x) = \\
    &\quad\expectation{p(c) p(y | x)}{\bar{w}(c) \variance{p_w(z | c)}{z} - \bar{w}(c,x,y)\variance{p_w(z | c, x, y)}{z}}
    .
\end{align*}
If, as above, we consider $w(z) = 1$ for all $z$ then this objective is a Bayesian variant of the expected variance reduction proposed by \citet{cohn1993neural}.

\looseness=-1
\paragraph{Choosing $w$ in practice} Our framework is agnostic to how the weight function $w$ is obtained: $w$ is simply a user-specified encoding of which values of $z$ matter more for downstream accuracy. In practice, $w$ may be hand-specified from domain knowledge, derived from an explicit cost model, or elicited from user preferences. When the downstream evaluation loss is already fixed, $w$ should be chosen to match that loss.

\subsection{Insights on prediction-space weighting}

There are important links between the EPU produced by our weighted-Bregman losses and the EPU of corresponding unweighted losses under an alternative model, as explained in the following result.

\vspace{5pt}
\begin{corollary}
    \label{prop:epu_weight}
    Assume the generalised entropy terms are well defined. Then, with $\bar w(c)=\expectation{p(z|c)}{w(z)}$ and $p_w(z|c)=w(z)\,p(z|c) / \bar w(c)$,
    \begin{align*}
    \mathrm{EPU}^{\phi,T,w}_{p(z,y|c,x)}=\bar w(c)\,
    \mathrm{EPU}^{\phi,T}_{p_w(z| c)\,p(y| x,z,c)},
    \end{align*}
    where the lack of the $w$ superscript in the second $\mathrm{EPU}$ is used to imply $w(z)=1, \forall z$.
    Moreover, averaging over contexts gives
    \begin{align}
    \mathrm{EPU}^{\phi,T,w}_{p(c,y,z|x)}&=\bar w\,\mathrm{EPU}^{\phi, T}_{p_w(c)\,p_w(z| c)\,p(y| x,z,c)}, \label{eq:reweighted_prior_1} \\
    &= \bar w\,\mathrm{EPU}^{\phi, T}_{p_w(z)\,p(c,y| x,z)}, \label{eq:reweighted_prior_2}
    \end{align}
    where $\bar w=\mathbb{E}_{p(c)}[\bar w(c)]$ and $p_w(c)=\bar w(c)\,p(c)/\bar w$. The same identities hold with $\mathrm{EPU}$ replaced by $\mathrm{EUR}$.
\end{corollary}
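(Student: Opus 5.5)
The plan is to work pointwise, at the level of a single posterior, and then integrate. The key identity is a homogeneity property of the weighted generalised entropy. For any distribution $q(z)$ satisfying the conditions of \Cref{the:main_theorem}, \Cref{eq:analytic_uncertainty} gives
\begin{align*}
\uncertainty{\phi,T}{w}{q(z)}=\bar w_q\,\uncertainty{\phi,T}{}{q_w(z)}.
\end{align*}
This is simply the unweighted formula applied to the tilted distribution $q_w$. One can also see it directly, without the closed form: $\expectation{q}{w(z)D_\phi(T(z),a)}=\bar w_q\,\expectation{q_w}{D_\phi(T(z),a)}$ for every $a$, so the minima over $\calA$ coincide up to the factor $\bar w_q$. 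This direct route only needs the entropies to be well defined, which matches the corollary's weaker hypothesis.

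Next I would apply this identity with $q=p(z\mid c,x,y)$ and track the normalisers through Bayes' rule. Write the tilted joint as $\tilde p(z,y\mid c,x)=p_w(z\mid c)\,p(y\mid x,z,c)$. Its posterior over $z$ is proportional to $w(z)p(z\mid c)p(y\mid x,z,c)\propto w(z)p(z\mid c,x,y)$, so it equals $p_w(z\mid c,x,y)$. Its marginal over $y$ is
\begin{align*}
\tilde p(y\mid c,x)=\frac{1}{\bar w(c)}\int w(z)p(z\mid c)p(y\mid x,z,c)\,dz=\frac{\bar w(c,x,y)}{\bar w(c)}\,p(y\mid c,x),
\end{align*}
where $\bar w(c,x,y)=\expectation{p(z\mid c,x,y)}{w(z)}$. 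Substituting these into
\begin{align*}
\EPU^{\phi,T,w}=\expectation{p(y\mid c,x)}{\bar w(c,x,y)\,\uncertainty{\phi,T}{}{p_w(z\mid c,x,y)}}
\end{align*}
converts the factor $\bar w(c,x,y)\,p(y\mid c,x)$ into $\bar w(c)\,\tilde p(y\mid c,x)$. That is exactly $\bar w(c)\,\EPU^{\phi,T}_{\tilde p}$, which is the first identity.

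For \Cref{eq:reweighted_prior_1}, I would take the expectation over $p(c)$ and write $\bar w(c)p(c)=\bar w\,p_w(c)$. For \Cref{eq:reweighted_prior_2}, I would note that
\begin{align*}
p_w(c)\,p_w(z\mid c)=\frac{w(z)\,p(c)\,p(z\mid c)}{\bar w}=p_w(z)\,p(c\mid z),
\end{align*}
with $p_w(z)=w(z)p(z)/\bar w$. Hence the joint $p_w(c)\,p_w(z\mid c)\,p(y\mid x,z,c)$ equals $p_w(z)\,p(c,y\mid x,z)$. The EPU depends only on the joint, because the inner entropy is taken over $z$ conditional on $(c,y)$ and the outer expectation is over $(c,y)$; the two forms therefore agree.

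For the EUR statements, I would apply the same homogeneity at the prior level: $\expectation{p(c)}{\uncertainty{}{w}{p(z\mid c)}}=\bar w\,\expectation{p_w(c)}{\uncertainty{}{}{p_w(z\mid c)}}$. Subtracting the EPU identity then gives the EUR identities.

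The main obstacle is keeping the normalisers straight. In particular, I need to check that the assumption $p(y\mid x,c)=p(y\mid x)$ does not break anything. It does not: under the tilted model, $y$ may depend on $c$, and I only ever use $p(y\mid c,x)$. I would also check that the tilted distributions are proper, which requires $\bar w(c)\in(0,\infty)$.
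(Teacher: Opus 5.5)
Your proposal is correct and follows essentially the paper's proof. The Bayes-rule identification of $p_w(z\mid c,x,y)$ as the posterior of the tilted joint $p_w(z\mid c)\,p(y\mid x,z,c)$, the reweighting $p(c)\to p_w(c)$, the factorisation into $p_w(z)\,p(c,y\mid x,z)$ and the EUR subtraction all match; the only difference is bookkeeping (you pull out $\bar w(c,x,y)$ per posterior and then reweight the $y$-marginal, whereas the paper substitutes the explicit weighted Bayes act and changes measure on the joint over $(z,y)$ in one step), so your version needs only that the minimum scales by $\bar w_q$, not the closed-form minimiser.
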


This result (see \cref{app:weighted_EPU_proof} for proof) 
makes precise what prediction-space weighting does to the active-learning objective:
weighting the downstream loss by $w(z)$ is equivalent to evaluating the \emph{unweighted} EPU under an alternative model where 
we have replaced the \emph{marginal} prior on $z$, $p(z)$, with the weighted prior, $p_w(z)\propto p(z)w(z)$, while keeping the conditional distribution on all other variables, $p(c,y|x,z)$, fixed.
Thus, if our original model was based on directly placing a prior on $z$ and then a likelihood on data and contexts given $z$ (e.g.~for BALD, $z$ corresponds to the model parameters and $c=\emptyset$), we can easily just adjust our prior instead of imposing any kind of weighting.

However, in many cases our prior on $z$ is not specified directly but is the pushforward of some other distribution.
For example, when using EPIG then our model is constructed by first defining $p(c)$, a prior on underlying parameters $p(\theta)$, and a shared predictive distribution for outputs given inputs parameterised by $\theta$, $p_{\mathrm{pred}}(\mathrm{out}|\theta,\mathrm{in})$.  The required model terms in~\Cref{prop:epu_weight} are then derived indirectly from these, namely $p(z|c) = \expectation{p(\theta)}{p_{\mathrm{pred}}(\mathrm{out}=z|\theta,\mathrm{in}=c)}$, $p(z)=\expectation{p(c)}{p(z|c)}$, $p(\theta|z,c) =p(\theta)p_{\mathrm{pred}}(\mathrm{out}=z|\theta,\mathrm{in}=c)/p(z|c)$, and  $p(y|x,z,c)=\expectation{p(\theta|z,c)}{p_{\mathrm{pred}}(\mathrm{out}=y|\theta,\mathrm{in}=x)}$.
Here it is not clear in general (at least without solving an inverse problem) how to adjust $p(\theta)$ and $p(c)$ to induce a desired change from $p(z)$ to $p_w(z)$, particularly given we need to also leave $p(y|x,z,c)$ unchanged.
The weighted EPU form is thus essential in allowing us to formulate our model--loss pairing in practice.

\Cref{eq:reweighted_prior_1} further shows that when we average over contexts, then the weighting on $z$ induces a change in the \emph{effective} context distribution to $p_w(c)\propto \bar w(c)\,p(c)$. 
For example, with EPIG, then it adjusts our original test-time input distribution $p(c)$ to increase the emphasis on inputs that have high expected weight under the context dependent prior $p(z|c)$.
Given one will generally construct the model by defining $p(c)$ and then $p(z|c)$, this highlights an important pitfall: simply adjusting the distribution $p(z|c)$ fails to account for the impact the weights have on contexts as well.

An important further nuance to these results occurs when the model is \emph{learned} from previous data using empirical risk minimisation, rather than being directly specified by the user.
Here one might consider incorporating the weight into the loss on which the model is trained instead of into the acquisition function.
That is, when training the model parameters, $\theta$, we could incorporate a $w(z)$ factor into the loss to induce the required change from $p(z)$ to $p_w(z)$.
While this will indeed actually have the desired effect on $p(z)$, it will also have the unwanted consequence of changing the distribution on $p(y|x,z,c)$.
Namely, by changing $p(\theta)$ to incorporate our desired weighting, this in turn changes $p(\theta|z,c)$, and in turn $p(y|x,z,c)$ (noting that in general $p_{\mathrm{pred}}(\mathrm{out}|\theta,\mathrm{in})$ will itself be fixed).
Thus, though this approach is in principle possible, it still requires the application of a weighting in the EPU (along with appropriately adjusting $p(c)$), this time to $y$ to account for the fact that our data distribution no longer represents our beliefs for what data we will actually see.

    \section{RELATED WORK}\label{sec:related_work}

Early decision-oriented work in the active-learning literature includes that of \citet{margineantu2005active}, who augmented standard predictive losses with input-dependent label-acquisition costs.
Similar considerations of variable labelling cost---as well as other practical factors, such as choosing between multiple possible label sources---were explored by \citet{donmez2008proactive}, \citet{kapoor2007selective}, \citet{nguyen2015combining} and \citet{werling2015onthejob}.
A bigger departure from standard practice was proposed by \citet{saartsechansky2007decision}, who shifted from predictive decision problems to more general problems.
Later work by \citet{javdani2014near}, \citet{sundin2019active} and \citet{filstroff2024targeted} was motivated by a similar emphasis on non-predictive decisions.
\citet{krishnamurthy2019active} focused on predictive decision problems but considered non-standard losses that place more weight on some label values than others.
More recently, \citet{hino2023active} used Bregman divergences to define new measures of committee disagreement in the context of query-by-committee data acquisition, while \citet{tan2023bayesian} used them to extend the ``mean cost of uncertainty'' framework \citep{yoon2013quantifying} in the particular case of classification.
All of the aforementioned work on active learning is similar to ours in its emphasis on non-standard losses, and some of it shares technical elements with our work, but (to our knowledge) our framework enables the derivation of practical acquisition objectives distinct from those in past work.

\begin{figure*}[t]
    \centering
    \includegraphics[width=\linewidth]{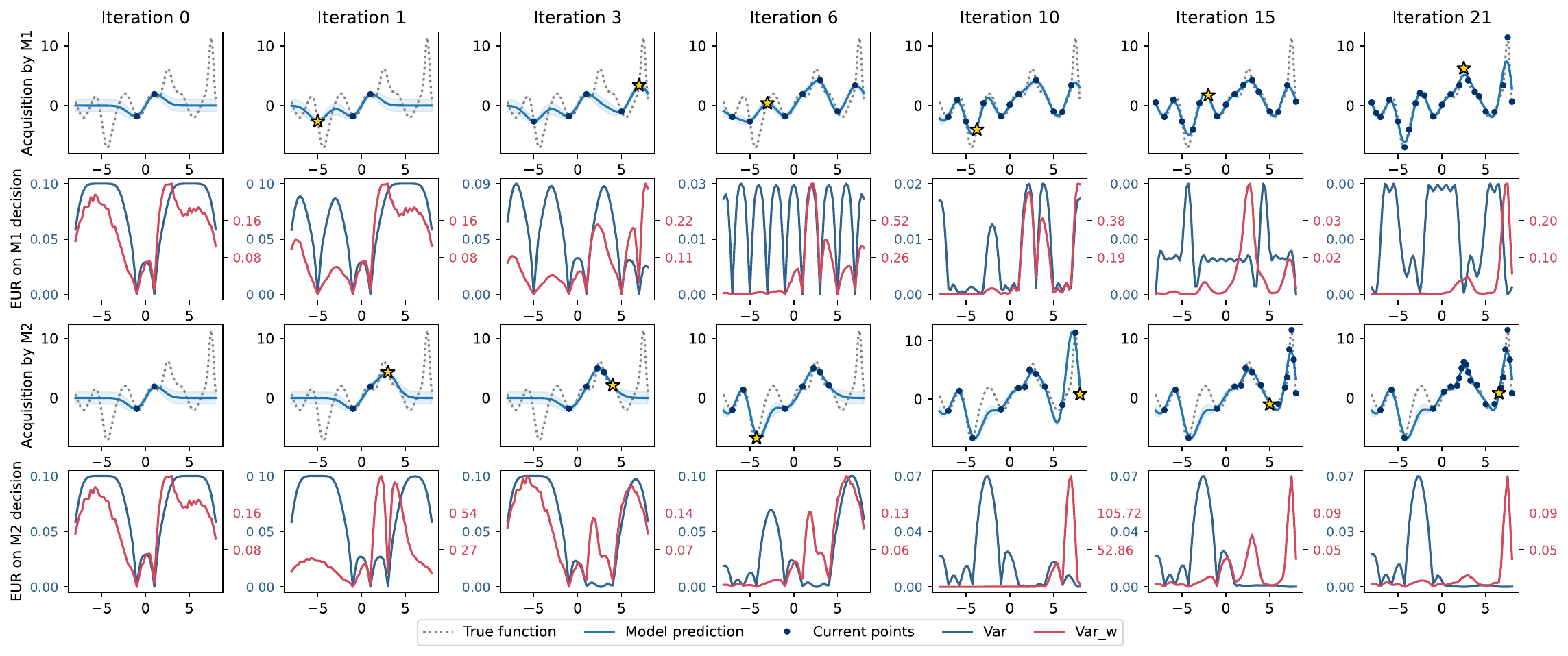}
    \vspace{-15pt}
    \caption{
        Expected variance reduction (\blue{\text{EVR}}) vs.\ a variant with weighting $w(z)=\exp(z)$ (\red{$\text{EVR}_w$}).
        Row~1: prediction with data acquired by \blue{\text{EVR}}.
        Row~2: values of \blue{\text{EVR}} and \red{$\text{EVR}_w$} given the Row~1 training set (the maximiser is labelled next).
        Row~3: prediction with data acquired by \red{$\text{EVR}_w$}.
        Row~4: values of \blue{\text{EVR}} and \red{$\text{EVR}_w$} given the Row~3 training set.
    }
    \vspace{-5pt}
    \label{fig:reg_vis}
\end{figure*}

Relevant foundational work in the experimental-design literature includes that of \citet{dawid1998coherent}, \citet{degroot1962uncertainty} and \citet{lindley1972bayesian}, who used Bayesian decision theory to derive acquisition objectives.
While principled, these approaches are generally computationally impractical. 
More recent work by \citet{huang2024amortized} demonstrated practical ideas in modern contexts by instead using amortisation to try and approximate the inner minimisation.
By contrast, our work avoids the need for amortisation or nested updating entirely by analytically minimising the Bregman divergence.

A complementary literature studies loss mismatches with an emphasis on learning from fixed data rather than acquiring new data.
For example, ``decision-focused learning'' trains predictors end-to-end through downstream (including combinatorial) optimisers to improve decision quality \citep{wilder2019melding}, and ``predict then optimise'' develops decision-calibrated surrogate losses with statistical guarantees for downstream optimisation \citep{elmachtoub2022smart}.
    \section{EXPERIMENTS}\label{sec:experiments}

Now we empirically assess the benefit of explicitly targeting the loss in a given predictive decision problem.
Specifically we run active learning on a range of regression and classification problems, in each case evaluating the performance of an acquisition objective that targets the test loss and comparing this against an alternative objective.
Code for reproducing our results is available at \shorturl{github.com/Zhuoyue-Huang/loss-driven-bayesian-active-learning}.
Additional practical details are in \Cref{sec:estimation,sec:experiment_details}.

\subsection{Regression}\label{sec:exp_reg}

We begin with four scenarios in which $z \in \bbR$ represents a real-valued output.
For each scenario, we consider two losses and their corresponding acquisition objectives (\Cref{sec:concrete_objectives}): standard squared error (standard EVR) and weighted squared error (weighted EVR).
Since one of these scenarios aligns with the original motivation for the Linex loss \citep{zellner1986bayesian}, we additionally explore the use of Linex on that problem.

\begin{figure}[t]
    \centering
    \includegraphics[width=\linewidth]{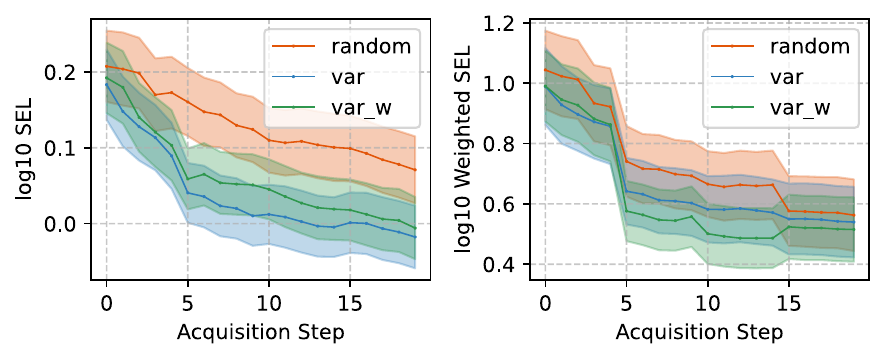}
    \includegraphics[width=\linewidth]{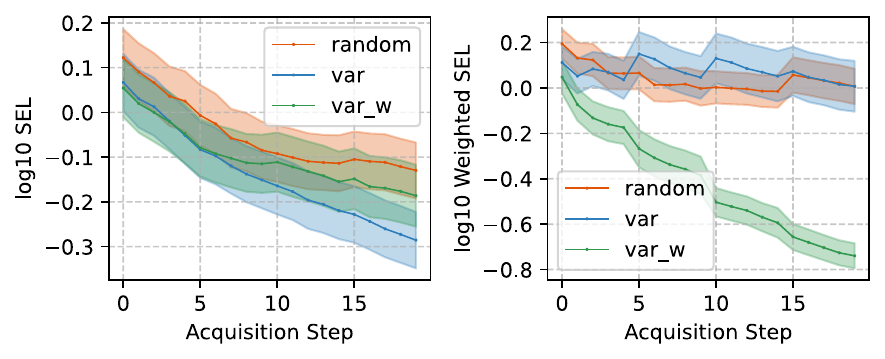}
    \includegraphics[width=\linewidth]{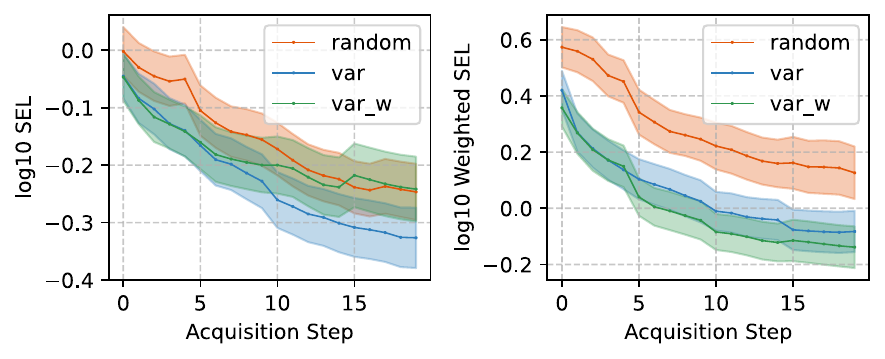}
    \caption{
        Top to bottom: performance (mean $\pm$ SEM) on \textsc{Slump}, \textsc{Yacht} and \textsc{Estate} with $w(z)=\exp(-z)$.
    }
    \label{fig:reg_metric}
\end{figure}

\subsubsection{Synthetic data}\label{sec:exp_reg_1d}

We begin with a scenario that can be easily visualised.
Specifically we consider one-dimensional inputs, $x \in \bbR$, and a true function, $f(x)$, defined as $f^* = 2f_0 + 8f_{(2.5, 0.5)}+ 10f_{(7.5, 0.25)} - 6f_{(-4.5, 0.5)}$
where $f_0(x)=\sin(2x)$ is a sine wave and $f_{(\mu, \sigma)}$ is a Gaussian density function parametrised by mean $\mu$ and standard deviation $\sigma$.
Observations follow $y=f(x)+\epsilon$ with $\epsilon\sim\mathcal N(0,0.04)$.
We consider $w(z) = \exp(z)$ as the weighting function of interest.
This corresponds to predictive errors being more consequential for higher values of the output, $z$, being predicted.

\begin{table}[t]
    \centering
    \small
    \begingroup
    \renewcommand{\SEM}[1]{ & \textnormal{\scriptsize #1}}

    \begin{tabular}{l
      r@{\,\textnormal{\scriptsize $\pm$\,}}l
      r@{\,\textnormal{\scriptsize $\pm$\,}}l
    }
        \toprule
        Method &
        \multicolumn{2}{c}{$\mathrm{SEL} \downarrow$} &
        \multicolumn{2}{c}{$\mathrm{SEL}_w \downarrow$}\\
        \midrule
        Random   & 0.9238 \SEM{0.0940} & 343.4 \SEM{109.5} \\
        EVR      & {\bfseries 0.3449} \SEM{0.0025} & 107.1 \SEM{1.971} \\
        $\text{EVR}_w$ & 1.5872 \SEM{0.1158} & {\bfseries 72.06}   \SEM{1.508} \\
        \bottomrule
    \end{tabular}
    \endgroup
    \caption{
        Test losses in 1d regression.
        Here $\mathrm{SEL}$ denotes squared-error loss and $\mathrm{SEL}_w$ denotes a weighted variant.
        For both test losses, the corresponding acquisition objective ($\mathrm{EVR}$ for the former; $\mathrm{EVR}_w$ for the latter) achieves the lowest loss.
        We report mean $\pm$ SEM over 25 runs.
        Bold indicates the best mean for each loss.
    }
    \vspace{-10pt}
    \label{tab:reg_metric}
\end{table}

\textbf{Model \& training}~
We use exact Gaussian-process (GP) regression \citep{williams2006gaussian} with zero mean, covariance $k(x,x')=\exp\!\left(-\tfrac{1}{2}(x-x')^2\right)$ and a Gaussian likelihood function that matches the true input-conditional output distribution.

\textbf{Data acquisition}~
In each run we start with 3 input-label pairs and acquire 25 additional pairs.
Candidates inputs, context inputs (for calculating EVR and weighted EVR) and test inputs are evenly spaced in $[-8,8]$, with 65, 49 and 97 points respectively.
With fixed hyperparameters, the GP one-step posterior update is exact, which we exploit to compute our acquisition objectives without refitting.

\textbf{Results}~
In \Cref{fig:reg_vis} we see how our choice of loss shapes data acquisition: the standard EVR just prioritises output regions in which the model's predictive variance is highest, while the weighted EVR additionally incorporates a prioritisation of high-output regions (in \Cref{app:reg_fig} we show how changing the weighting function leads to prioritising low-output regions).
In \Cref{tab:reg_metric} we see the importance of incorporating the test loss into data acquisition.
For both standard and weighted squared error, the acquisition objective that targets the loss achieves much better average loss across test data.

\subsubsection{UCI data}

Next we shift to three scenarios based on datasets from the UCI repository \citep{dua2017uci}: \textsc{Slump} ($N = 103$, $D = 7$; mix proportions $\rightarrow$ slump; \citealp{concrete_slump_test_182}), \textsc{Yacht} ($N = 308$, $D = 6$; hull descriptors $\rightarrow$ residuary resistance; \citealp{yacht_hydrodynamics_243}) and \textsc{Estate} ($N = 414$, $D = 6$; location/time features $\rightarrow$ price per unit area; \citealp{real_estate_valuation_477}); $N$ denotes the number of input-label pairs and $D$ the input dimensionality.
We consider $w(z) = \exp(-z)$ as the weighting function of interest, favouring accuracy in low-output regions; this is especially meaningful for \textsc{Yacht}, where low residuary resistance corresponds to lower required propulsive power and therefore improved performance.

\textbf{Model \& training}~
Again we use a GP model.
Here we use a constant prior mean, $m(x) = \mathrm{const}$.
The kernel is a sum of a linear (dot‑product) term and a Matérn term with i.i.d.\ white noise.
Additional modelling details are in \cref{app:gp_model_setting}.

\textbf{Data acquisition}~
In each run we start with 10 input-label pairs and acquire 20 additional pairs.
For reproducibility across methods, contexts and test inputs are drawn per dataset and resampled at each trial: we use fixed test sizes of $20$, $60$, and $80$ for \textsc{Slump}, \textsc{Yacht}, and \textsc{Estate}, respectively, with the rest being pool candidates.
We use the same acquisition methods as in \cref{sec:exp_reg_1d}.
Estimation details are in \cref{app:reg_estimation} and are independent of the kernel choice.

\textbf{Results}~
In \Cref{fig:reg_metric} we again see that the objective matched to the given test loss performs better than the unmatched objective.
In all data scenarios, $\text{EVR}_w$ attains the best $\mathrm{SEL}_w$ and $\text{EVR}$ attains the best $\mathrm{SEL}$.

\subsubsection{Linex loss}

So far we have compared EVR to its weighted variant.
Here we compare EVR to an alternative objective that targets a Linex loss, as presented in \Cref{tab:bregman_losses}.
Linex is asymmetric, penalising large overestimation errors much more strongly than underestimation for $\alpha>0$.
We reuse our setup for \textsc{Estate} from before but here acquire 50 new input-label pairs rather than 20.

\begin{figure}[t!]
    \centering
    \includegraphics[width=\linewidth]{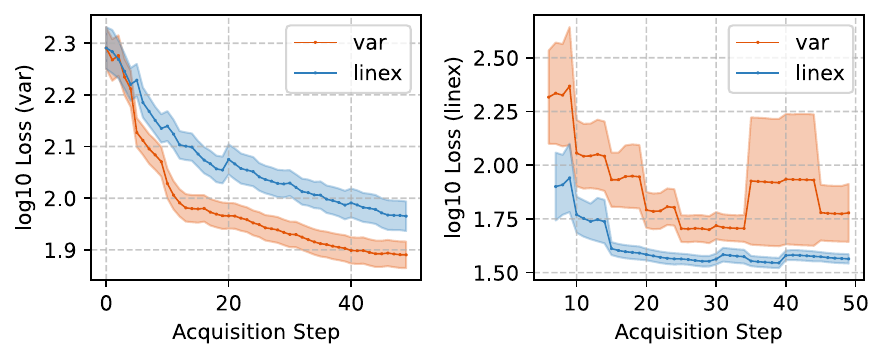}
    \caption{Evolution of performance (mean $\pm$ SEM) under downstream losses (SEL and Linex) on \textsc{Estate}.
    }
     \vspace{-5pt}
    \label{fig:linex_estate}
\end{figure}

\textbf{Results}~
In \Cref{fig:linex_estate} we see that the previously observed benefit of targeted data acquisition holds convincingly when the comparison is between standard squared error and the Linex loss.
Each acquisition objective is best under its own downstream loss, with the difference often constituting an order of magnitude.

\subsection{Classification}

Now we study three scenarios in which $z \in \{1, 2, \ldots, C\}$ is a class label using the additional UCI datasets \textsc{Vehicle} ($C=4$, $N=946$, $D=18$; silhouettes; \citealp{statlog_vehicle}), \textsc{Landsat} ($C=6$, $N=6435$, $D=36$; satellite imagery; \citealp{statlog_landsat}) and \textsc{Vowel} ($C=11$, $N=528$, $D=10$; speech features; \citealp{connectionist_bench_vowel}).
Two classes per dataset are designated high‑priority with $w(i)=50$; the remaining classes use $w(i)=1$.
For each scenario, we consider two losses and their corresponding acquisition objectives (\Cref{sec:concrete_objectives}): NLL (EPIG) and weighted NLL (weighted EPIG).

\textbf{Model \& training}~
We use random forests with 1{,}000 trees.
Given a forest $\theta$, each tree $j$ yields class probabilities $\theta_{j,i}(x)$ for class $i$, so that $p(z=i \mid x,\theta_j)=\theta_{j,i}(x)$ and the forest defines $p_w(z \mid x) \propto w(z) p(z \mid x)$ by averaging over trees then weighting with class importance.
The random forests are updated with standard (non‑Bayesian) training; the acquisition computations use the Bayesian predictive posterior.

\textbf{Data acquisition}~
In each run we start with 5 input-label pairs per class, after which we acquire 100 additional pairs.
For both acquisition-objective estimation and construction of the test set, we perform stratified subsampling with a fixed number per class: $45$ examples per class for \textsc{Vehicle} ($4\times 45 = 180$ total), $200$ per class for \textsc{Landsat} ($6\times 200 = 1{,}200$), and $15$ per class for \textsc{Vowel} ($11\times 15 = 165$).

\textbf{Results}~
With reference to the test losses in \Cref{fig:clf_metric} and class proportions in \Cref{fig:clf_proportion}, we can see that targeting the weighted loss leads us to prioritise high-weight classes, helping improve the test loss.
The performance benefit we saw in regression problems therefore carries over to classification problems.

\begin{figure}[t]
    \centering
    \includegraphics[width=\linewidth]{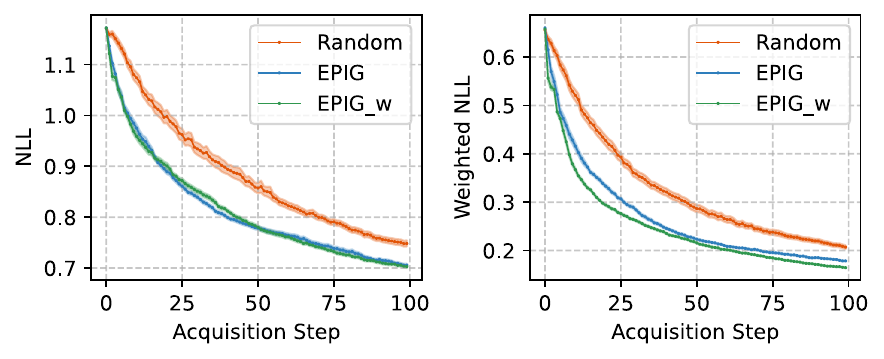}
    \includegraphics[width=\linewidth]{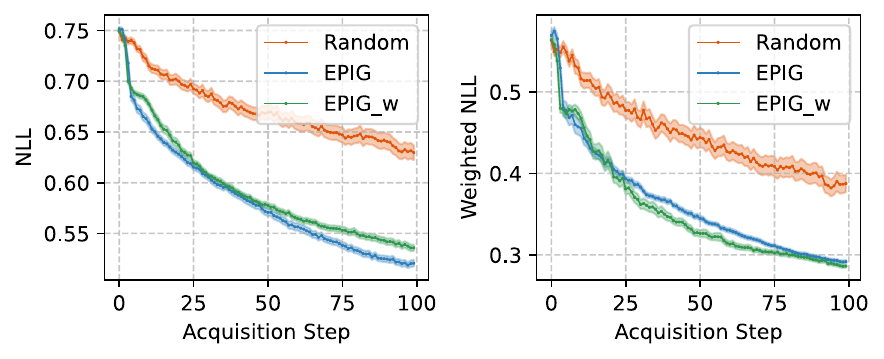}
    \includegraphics[width=\linewidth]{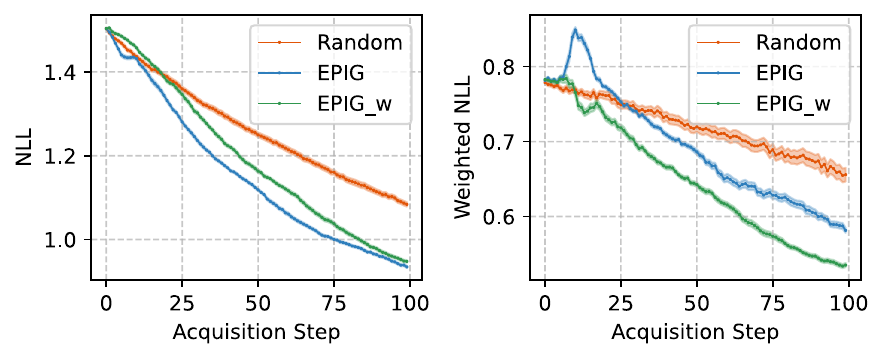}
    \caption{
        Top to bottom: mean $\pm$ SEM on \textsc{Vehicle} ($w=[50, 1, 1, 50]$), \textsc{Landsat} ($w=[1, 1, 1, 1, 50, 50]$) and \textsc{Vowel} ($w=[1, 1, 1, 1, 1, 1, 50, 50, 1, 1, 1]$).
    }
    \vspace{-5pt}
    \label{fig:clf_metric}
\end{figure}

    \section{CONCLUSION}
\looseness=-1
We have argued that popular active learning approaches do not generally account for the full range of decision problems that we want to target in practice.
To address this we have revisited the decision-theoretic foundations of Bayesian experimental design and identified a principled, general approach that directly targets the loss of interest.
We make this general approach practically applicable by showing that it can be analytically simplified when using weighted Bregman-divergence losses based on transformations of the world state.
These losses are very general, corresponding to the scenario where the Bayes act is a (weighted) posterior mean, which is true in many practical prediction and estimation problems.
We have further provided two example realisations of our framework, in the form of weighted variants of the EPIG and EVR acquisition functions that allow us to prioritise predictive performance over specific classes or regions.

    \section*{ACKNOWLEDGEMENTS}

ZH is supported by the EPSRC CDT in Statistics and Machine Learning (EP/Y034813/1).
FBS is supported by the EPSRC Probabilistic AI Hub (EP/Y028783/1).
TR is supported by the EPSRC grant EP/Y037200/1.
    \bibliography{references}

    \clearpage
    \section*{CHECKLIST}

\begin{enumerate}

  \item For all models and algorithms presented, check if you include:
  \begin{enumerate}
    \item A clear description of the mathematical setting, assumptions, algorithm, and/or model. [Yes]
    \item An analysis of the properties and complexity (time, space, sample size) of any algorithm. [Yes]
    \item (Optional) Anonymized source code, with specification of all dependencies, including external libraries. [Yes]
  \end{enumerate}

  \item For any theoretical claim, check if you include:
  \begin{enumerate}
    \item Statements of the full set of assumptions of all theoretical results. [Yes]
    \item Complete proofs of all theoretical results. [Yes]
    \item Clear explanations of any assumptions. [Yes]     
  \end{enumerate}

  \item For all figures and tables that present empirical results, check if you include:
  \begin{enumerate}
    \item The code, data, and instructions needed to reproduce the main experimental results (either in the supplemental material or as a URL). [Yes]
    \item All the training details (e.g., data splits, hyperparameters, how they were chosen). [Yes]
    \item A clear definition of the specific measure or statistics and error bars (e.g., with respect to the random seed after running experiments multiple times). [Yes]
    \item A description of the computing infrastructure used. (e.g., type of GPUs, internal cluster, or cloud provider). [Yes]
  \end{enumerate}

  \item If you are using existing assets (e.g., code, data, models) or curating/releasing new assets, check if you include:
  \begin{enumerate}
    \item Citations of the creator If your work uses existing assets. [Yes]
    \item The license information of the assets, if applicable. [Yes]
    \item New assets either in the supplemental material or as a URL, if applicable. [Yes]
    \item Information about consent from data providers/curators. [Yes]
    \item Discussion of sensible content if applicable, e.g., personally identifiable information or offensive content. [Yes]
  \end{enumerate}

  \item If you used crowdsourcing or conducted research with human subjects, check if you include:
  \begin{enumerate}
    \item The full text of instructions given to participants and screenshots. [Not Applicable]
    \item Descriptions of potential participant risks, with links to Institutional Review Board (IRB) approvals if applicable. [Not Applicable]
    \item The estimated hourly wage paid to participants and the total amount spent on participant compensation. [Not Applicable]
  \end{enumerate}

\end{enumerate}

    \appendix
    \onecolumn

    \section{FUNCTIONAL BREGMAN DIVERGENCES}\label{sec:functional_bregman}

For readability, the main text presents Bregman divergences and associated results in finite-dimensional vector notation. Many targets of interest in Bayesian active learning are more naturally viewed as elements in an infinite-dimensional function space (e.g., probability densities or latent functions).
This appendix summarises \emph{functional} Bregman divergences and records the functional analogue of the key optimality and uncertainty identities used in the paper.

Let $(\mathcal{X}, \Sigma, \mu)$ be a $\sigma$-finite measure space and let $L^p := L^p(\mathcal{X},\mu)$
for some $1 \le p < \infty$. Let $\mathcal{K} \subset L^p$ be an open convex set and let
$\Phi:\mathcal{K}\to\mathbb{R}$ be strictly convex and G\^{a}teaux differentiable on $\mathcal{K}$.
For $g\in\mathcal{K}$ and direction $h\in L^p$, write the (G\^{a}teaux) derivative as
\[
\delta\Phi(g;h) := \lim_{\epsilon\to 0} \frac{\Phi(g+\epsilon h)-\Phi(g)}{\epsilon},
\]
whenever the limit exists \citep{hiriart2004fundamentals}.

\begin{definition}[Functional Bregman divergence \citep{frigyik2008functional}]
For $f,g\in\mathcal{K}$, the functional Bregman divergence induced by $\Phi$ is
\[
D_{\Phi}(f,g) := \Phi(f) - \Phi(g) - \delta\Phi(g; f-g).
\]
\end{definition}

A mild regularity condition is that for each fixed $g\in\mathcal{K}$ the map
$h\mapsto \delta\Phi(g;h)$ defines a bounded linear functional on $L^p$ \citep{frigyik2008functional}. By $L^p$--$L^q$ duality, there exists
$\phi_g \in L^q$ (with $1/p + 1/q = 1$, and $q=\infty$ if $p=1$) such that
\[
\delta\Phi(g;h) = \int_{\mathcal{X}} \varphi_g(x)\,h(x)\,d\mu(x)
\quad\text{for all } h\in L^p.
\]
In this case,
\begin{equation}
D_{\Phi}(f,g) \;=\; \Phi(f)-\Phi(g)-\langle \varphi_g, f-g\rangle,
\qquad
\langle \varphi_g, f-g\rangle := \int_{\mathcal{X}} \varphi_g(x)\,(f(x)-g(x))\,d\mu(x).
\label{eq:func_inner_prod}
\end{equation}

Throughout we assume this mild regularity so that $\delta\Phi(g;\cdot)\in (L^p)^*$ and therefore admits the integral representation above, allowing us to identify $\varphi_g$ with the (Gâteaux/Fréchet) gradient of $\Phi$ at $g$; for a convex $\Phi$ this coincides with the unique subgradient at $g$ when differentiability holds. Readers may safely think of $\varphi_g$ as the “gradient of $\Phi$ at $g$” for intuition.

All results in the paper stated for $D_\phi$ can be extended to functional Bregman divergence $D_\Phi$ under the replacements $\phi\mapsto\Phi$, $\nabla\phi(\cdot)\mapsto \varphi_{(\cdot)}$ (or $\delta\Phi(\cdot;\cdot)$), and $\langle\cdot,\cdot\rangle\mapsto \int \cdot\,\cdot\,d\mu$.
In particular, the Bayes act under $D_\Phi(F,g)$ for a random function $F$ is still the mean function: under mild integrability and assuming $\mathbb{E}[F]\in\mathcal K$,
\[
\arg\min_{g\in\mathcal K}\mathbb{E}\big[D_\Phi(F,g)\big] = \mathbb{E}[F],
\qquad
\min_{g\in\mathcal K}\mathbb{E}\big[D_\Phi(F,g)\big]
= \mathbb{E}[\Phi(F)]-\Phi(\mathbb{E}[F]),
\]
a nonnegative Jensen gap \citep{frigyik2008functional}.

\paragraph{Example: KL divergence between densities.}
Let $\mathcal K$ be a convex set of strictly positive probability densities on $\mathcal X$ (w.r.t. $\mu$) and define $\Phi(p):=\int_{\mathcal X} p(x)\log p(x)\,d\mu(x)$. Then the induced functional Bregman divergence equals
\[
D_\Phi(p,q) = \int_{\mathcal X} p(x)\log\frac{p(x)}{q(x)}\,d\mu(x)=\mathrm{KL}(p\|q),
\]
where the normalisation $\int p\,d\mu=\int q\,d\mu=1$ ensures cancellation of additive terms
\citep{frigyik2008functional}.
    \section{PROPER SCORING RULES AND DIVERGENCE REPRESENTATION}\label{app:proper}

When the prediction target is itself a probability distribution (e.g., in probabilistic learning tasks), losses are naturally expressed as \emph{scoring rules}.
This appendix records the standard link between (strictly) proper scoring rules and (functional) Bregman divergences.
This connection motivates our focus on Bregman-type losses: many commonly used probabilistic losses, including log loss, arise as proper scoring rules whose regret is a Bregman divergence.

Let $(Z,\mathcal{Z})$ be a measurable space, $\mathcal{P}$ a convex class of probability laws on $Z$, and $S:\mathcal{P}\times Z\to\mathbb{R}$ a (loss-oriented) scoring rule. $S$ is \emph{(strictly) proper} if for all $p,q\in\mathcal{P}$,
\begin{equation*}
    \mathbb{E}_{p(z)}\!\left[S(p,z)\right]\le \mathbb{E}_{p(z)}\!\left[S(q,z)\right]
\end{equation*}
with equality if and only if $q=p$. A differentiable scoring rule is strictly proper if and only if its \emph{score regret} is a (functional) Bregman divergence \citep{gneiting2007strictly,ovcharov2018proper}:
\begin{equation}
\label{eq:score_regret_equals_bregman}
\expectation{p(z)}{S(q,z)}-\expectation{p(z)}{S(p,z)}\!=\!
\begin{cases}
D_{\phi}(p,q), \hspace{-0.5em}& p,q\in\Delta^{K-1}\\
D_{\Phi}(p,q), \hspace{-0.5em}& p,q\in\mathcal{P}.
\end{cases}
\end{equation}
In finite-outcome problems this yields the \emph{divergence representation} \citep{gneiting2007strictly}:
\begin{align}
    S(q,z)&= D_\phi(e_z,q) + c(z)\nonumber\\
          &= -\phi(q) - \langle \nabla\phi(q),\,e_z-q\rangle + c(z)\label{eq:dis-proper-score},
\end{align}
where scores are equivalent up to an additive $c(z)$. On general outcome spaces, the \emph{Savage representation} \citep{gneiting2007strictly, dawid2014theory} states that there exists a concave entropy $H$ on $\mathcal{P}$ and a subgradient $h_q\in\partial H(q)$ such that
\begin{equation*}
    S(q,z)=H(q)+h_q(z)-\int h_q\,dq +c(z).
\end{equation*}
Writing $\Phi:=-H$ (so $\Phi$ is convex) and choosing any $\varphi_q\in\partial\Phi(q)$ with $\varphi_q=-h_q$, we obtain the continuous analogue\footnote{To match with the Bregman divergence formulation in \cref{eq:func_inner_prod}, intuitively $\langle \varphi_q,\delta_z-q\rangle\coloneqq\varphi_q(z)-\int\varphi_q\,dq$. This extension sits slightly outside the $L^p$–$L^q$ pairing used for functional Bregman divergences but is consistent with the general function–measure duality \citep{dawid2007geometry}.} of the discrete formula:
\begin{align}
S(q,z)= -\Phi(q) - \varphi_q(z) + \expectation{q(z)}{\varphi_q(z)} + c(z),\label{eq:cts-proper-score}
\end{align}
and consequently the score regret equals the functional Bregman divergence $D_\Phi$ \citep{ovcharov2018proper}.
\cref{eq:score_regret_equals_bregman} makes clear that, when the inputs are \emph{distributions}, one uses the appropriate Bregman geometry for the input type: vector-input $D_{\phi}$ on the simplex (discrete) and functional $D_{\Phi}$ on densities (continuous). Minimising expected score under the truth $p$ is therefore equivalent to minimising (functional) Bregman divergence to $p$, uniquely solved by $q=p$. 

\paragraph{Example: KL divergence between densities.} The log score $S(q,z)=-\log q(z)$ is strictly proper and its regret is $\mathrm{KL}(p\|q)$.
Equivalently, it corresponds to the convex functional $\Phi(p)=\int p\log p$ in \cref{sec:functional_bregman}.

Note that we restrict the discussion to cases where the decision rule outputs a predictive distribution $q$, rather than a point prediction. The primitive loss is then a scoring rule $S(q,z)$. In this setting, the Bregman divergence appears at the level of expected score regret. It should therefore be distinguished from the pointwise Bregman loss $D_\phi(T(z),a)$ used in the main text for point or embedded-point prediction. 

In the case of discrete probability distribution spaces, one can bridge between Bregman divergence for point prediction and distribution directly. If $z\in\{1,\dots, K\}$ and $e_z\in\Delta^{K-1}$ is the one-hot encoding, then $\expectation{p_{n}(z)}{e_z}=p_{n}(z)$, the full class-probability vector. By the same proof of evaluation-discrepancy identity in \cref{prop:eval-decomp}, 
\begin{equation*}
    \expectation{p_{n}(z)}{D_\phi(e_z, q)} - \expectation{p_{n}(z)}{D_\phi(e_z, p_n(z))}=D_\phi(p_n,q)
\end{equation*}
so the expected pointwise Bregman loss on one-hot outcomes induces the Bregman divergence on the simplex under the same potential. Therefore, when the targets are in a discrete probability space, the optimal data-acquisition policy is equivalent to the minimiser of the expected proper scoring rule $S$ (defined previously in \cref{eq:dis-proper-score,eq:cts-proper-score}):
\begin{equation*}
    \pi^\ast_d = 
    \underset{\pi_d\in\Pi_d}{\arg\min}\,\expectation{p_n(z,d;\pi_d)}{S(p_n(\cdot|d;\pi_d), z)}.
\end{equation*}
    \section{PROOFS AND DERIVATIONS}\label{sec:proofs}
\subsection{Generalised entropy and expected predictive uncertainty}\label{app:proof_uncertainty_epu_general_form}

\begin{proposition}[\citealp{banerjee2005optimality}]\label{thm:expected_bregman_minimiser}
    Let $\mathcal Z \subseteq \bbR^K$ be an open convex set, $\phi: \mathcal Z \to \bbR$ be a strictly convex differentiable potential function, and $z \sim p(z)$ a random variable taking values in $\mathcal Z$ for which both $\expectation{p(z)}{z}$ and $\expectation{p(z)}{\phi(z)}$ are finite.
    With the assumption $\expectation{p(z)}{z}\in\mathcal Z$, the minimiser of the expected Bregman divergence is the expectation of $z$:
    \begin{align*}
        \argmin_{a \in \mathcal Z} \expectation{p(z)}{D_\phi(z, a)}
        =
        \expectation{p(z)}{z}
        .
    \end{align*}
\end{proposition}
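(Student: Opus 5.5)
The plan is the classical ``compensation identity'' for Bregman divergences. Write $\mu := \expectation{p(z)}{z}$, which lies in $\mathcal Z$ by assumption. For any $a \in \mathcal Z$ I will show
\begin{align*}
    \expectation{p(z)}{D_\phi(z, a)}
    =
    \expectation{p(z)}{D_\phi(z, \mu)} + D_\phi(\mu, a),
\end{align*}
and then conclude from the properties of $D_\phi$ recalled in \Cref{sec:background}.

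The first step is to expand both sides using the definition $D_\phi(x,y) = \phi(x) - \phi(y) - \langle \nabla\phi(y), x-y\rangle$. The left-hand side is
\begin{align*}
    \expectation{p(z)}{\phi(z)} - \phi(a) - \langle \nabla\phi(a), \mu - a\rangle.
\end{align*}
Here linearity of expectation is used to pull the inner product through, since $\nabla\phi(a)$ is a fixed vector. On the right-hand side, the first term is $\expectation{p(z)}{\phi(z)} - \phi(\mu) - \langle \nabla\phi(\mu), \expectation{p(z)}{z - \mu}\rangle$. The inner-product term vanishes exactly, because $\expectation{p(z)}{z - \mu} = 0$. Adding $D_\phi(\mu,a) = \phi(\mu) - \phi(a) - \langle\nabla\phi(a), \mu - a\rangle$ makes the $\phi(\mu)$ terms cancel and recovers the left-hand side.

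Before this algebra, integrability has to be checked, and this is the only step needing care. All the expectations are finite because $\expectation{p(z)}{\phi(z)}$ and $\expectation{p(z)}{z}$ are finite by hypothesis, and $\phi(a), \phi(\mu), \nabla\phi(a), \nabla\phi(\mu)$ are finite because $a,\mu \in \mathcal Z$, an open set on which $\phi$ is differentiable. So no $\infty-\infty$ issue arises and the linear splitting is legitimate.

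Finally, the first term on the right does not depend on $a$, and $D_\phi(\mu,a) \ge 0$ with equality iff $a = \mu$. Nonnegativity follows from convexity, since the tangent plane lies below $\phi$. The equality case follows from strict convexity: if $a \ne \mu$, the tangent plane at $a$ touches $\phi$ only at $a$, so the inequality is strict at $\mu$. Hence $\mu$ is the unique minimiser over $\mathcal Z$. As a by-product, the minimal value is $\expectation{p(z)}{D_\phi(z,\mu)} = \expectation{p(z)}{\phi(z)} - \phi(\mu)$, which is the Jensen-gap form later needed for \Cref{the:main_theorem}.
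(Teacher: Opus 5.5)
Your proof is correct, and it is the standard argument behind the result the paper cites from \citet{banerjee2005optimality} without reproving it: the compensation identity $\mathbb{E}_{p(z)}[D_\phi(z,a)] = \mathbb{E}_{p(z)}[D_\phi(z,\mu)] + D_\phi(\mu,a)$, combined with $D_\phi(\mu,a)\ge 0$ and equality iff $a=\mu$. The paper carries out the same expansion in its proof of \Cref{prop:eval-decomp} (take $p_{\mathrm{eval}}$ to be the $p(z)$ here and replace $\mathbb{E}_p[z]$ by an arbitrary $a$), so your route coincides with the paper's.
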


\begin{theorem*}
\looseness=-1
Assume the terminal loss takes the form of a weighted Bregman divergence on measurable transformations of the world state as per~\Cref{eq:weighted_bregman_loss}, and that $\expectation{q(z)}{w(z)}$, $\expectation{q(z)}{w(z)T(z)}$ and $\expectation{q(z)}{w(z)\phi(T(z))}$ are finite for some generic distribution $q(z)$. If $\mathcal{A} \subseteq \mathrm{ri}(\mathrm{dom}(\phi))$ is convex and contains $\expectation{q_w(z)}{T(z)}$ where  $q_w(z) \!=\! w(z) q(z) / \bar{w}_q$ and $\bar{w}_q \!=\! \expectation{q(z)}{w(z)}$, then the generalised entropy associated with this $q$ is given by
\begin{align*}
    \uncertainty{\phi,T}{w}{q(z)} := \min_{a \in \calA} \expectation{q(z)}{w(z) D_\phi(T(z), a)}=
    \bar{w}_q \left( \expectation{q_w(z)}{\phi(T(z))} - \phi(\expectation{q_w(z)}{T(z)}) \right).
\end{align*}
Assuming that $p(z|d;\pi_d)$ satisfies the above restrictions of $q$ for all $(d,\pi_d)$,
then the Bayes-optimal data gathering policy for our model-loss pairing is given by
\begin{align*}
    \pi_d^* = \argmin_{\pi_d\in\Pi_d} ~\bar{w}\,\mathbb{E}_{p_w(d;\pi_d)}\Big[
        \expectation{p_w(z | d;\pi_d)}{\phi(T(z))} - \phi(\expectation{p_w(z | d;\pi_d)}{T(z)}),
\end{align*}
where we define beliefs $p_w(d;\pi_d)=\bar w(d,\pi_d)p(d ; \pi_d) /\bar w$ and $p_w(z | d ; \pi_d)  =  w(z) p(z | d ; \pi_d) / \bar{w}(d, \pi_d)$, and weights $\bar{w}(d,\pi_d)  =  \expectation{p(z | d ; \pi_d)}{w(z)}$ and $\bar w = \expectation{p(z)}{w(z)}$.
\end{theorem*}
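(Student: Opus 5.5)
The plan is to reduce the weighted problem to an unweighted one by a change of measure, then apply \Cref{thm:expected_bregman_minimiser}. Since $w\ge 0$ and $\bar w_q=\expectation{q(z)}{w(z)}$ is finite, assume for now that $\bar w_q>0$; if $\bar w_q=0$ both sides vanish, with the convention $0\cdot(\cdot)=0$. Then $q_w(z)=w(z)q(z)/\bar w_q$ is a probability distribution, and for every $a\in\calA$
\begin{align*}
\expectation{q(z)}{w(z)D_\phi(T(z),a)}=\bar w_q\,\expectation{q_w(z)}{D_\phi(T(z),a)} .
\end{align*}
The finiteness assumptions on $\expectation{q}{wT}$ and $\expectation{q}{w\,\phi(T)}$ say exactly that $\expectation{q_w}{T(z)}$ and $\expectation{q_w}{\phi(T(z))}$ are finite. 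So the random variable $u=T(z)$ under $q_w$ satisfies the integrability hypotheses of \Cref{thm:expected_bregman_minimiser}.

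Rather than invoke that proposition as a black box (its domain is an open set, whereas here we work on $\calA\subseteq\mathrm{ri}(\mathrm{dom}\,\phi)$), I will prove the key identity directly; this is the Bregman "bias--variance" decomposition. Write $\mu=\expectation{q_w}{T(z)}\in\calA$. For any $a\in\calA$, expand $D_\phi(u,a)-D_\phi(u,\mu)$:
\begin{align*}
D_\phi(u,a)-D_\phi(u,\mu)=\phi(\mu)-\phi(a)-\langle\nabla\phi(a),u-a\rangle+\langle\nabla\phi(\mu),u-\mu\rangle .
\end{align*}
Taking the expectation under $q_w$, linearity (justified by the finite first moment) kills the last term and turns $u$ into $\mu$ in the third term. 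This gives
\begin{align*}
\expectation{q_w}{D_\phi(u,a)}=\expectation{q_w}{D_\phi(u,\mu)}+D_\phi(\mu,a).
\end{align*}
Since $D_\phi\ge 0$, with equality iff its arguments coincide, the minimum over $\calA$ is attained uniquely at $a=\mu$. The minimum value is
\begin{align*}
\expectation{q_w}{D_\phi(u,\mu)}=\expectation{q_w}{\phi(u)}-\phi(\mu)-\langle\nabla\phi(\mu),\expectation{q_w}{u}-\mu\rangle=\expectation{q_w}{\phi(T(z))}-\phi(\mu).
\end{align*}
Multiplying by $\bar w_q$ yields \Cref{eq:analytic_uncertainty}.

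The main obstacle is the technical care needed here. $\nabla\phi(a)$ must exist for $a\in\calA$, which is why $\calA\subseteq\mathrm{ri}(\mathrm{dom}\,\phi)$ is assumed; differentiability should be understood relative to the affine hull, as in the simplex case. We also need $T(z)\in\mathrm{dom}\,\phi$ almost surely, so that $D_\phi(T(z),a)$ is defined, and the expectation-splitting must not produce $\infty-\infty$. The moment assumptions cover the latter, since each term is separately $q_w$-integrable. Convexity of $\calA$ is not strictly needed once $\mu\in\calA$ is assumed, but it matches the stated hypotheses.

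For the policy statement, apply the first part with $q=p(z\mid d;\pi_d)$ for each $(d,\pi_d)$. This gives $h_\ell[p(z\mid d;\pi_d)]=\bar w(d,\pi_d)\,G(d,\pi_d)$, where $G(d,\pi_d)=\expectation{p_w(z\mid d;\pi_d)}{\phi(T(z))}-\phi(\expectation{p_w(z\mid d;\pi_d)}{T(z)})$. Substituting into \Cref{eq:data_objective_explicit_min},
\begin{align*}
\EPU^\ell=\expectation{p(d;\pi_d)}{\bar w(d,\pi_d)G(d,\pi_d)}=\bar w\,\expectation{p_w(d;\pi_d)}{G(d,\pi_d)} .
\end{align*}
The last equality is just the definition $p_w(d;\pi_d)=\bar w(d,\pi_d)p(d;\pi_d)/\bar w$. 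Its normalisation follows from the tower property, $\expectation{p(d;\pi_d)}{\bar w(d,\pi_d)}=\expectation{p(z)}{w(z)}=\bar w$, because $p(d;\pi_d)p(z\mid d;\pi_d)=p(z)p(d\mid z;\pi_d)$. Taking the $\argmin$ over $\Pi_d$ gives \Cref{eq:opt_policy_breg}.
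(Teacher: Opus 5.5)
Your proof is correct and follows the same route as the paper: reweight to $q_w$, identify the minimiser as $\expectation{q_w(z)}{T(z)}$, evaluate the resulting Jensen gap, and then rewrite the EPU using $p_w(d;\pi_d)$. The only difference is that you derive the minimiser directly from the three-point identity $\expectation{q_w}{D_\phi(u,a)}=\expectation{q_w}{D_\phi(u,\mu)}+D_\phi(\mu,a)$ (the same computation as in the paper's \Cref{prop:eval-decomp}) rather than citing \Cref{thm:expected_bregman_minimiser}, which avoids the mismatch between that proposition's open-domain hypothesis and the theorem's setting $\calA\subseteq\mathrm{ri}(\mathrm{dom}(\phi))$.
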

\begin{proof}
    With a change of belief, we have
    \begin{align*}
        \expectation{q(z)}{w(z) D_\phi(T(z), a)}
        =
        \bar{w}_q\,\expectation{q_w(z)}{D_\phi(T(z), a)}.
    \end{align*}
    As $\bar{w}$ is a constant, by \Cref{thm:expected_bregman_minimiser}, the minimiser of $\bar{w}_q\,\expectation{q_w(z)}{D_\phi(T(z), a)}$ is $a^*=\expectation{q_w(z)}{T(z)}$. Therefore, the corresponding minimal value is
    \begin{align*}
        \uncertainty{\phi}{w}{q(z)}&=\bar{w}_q\expectation{q_w(z)}{D_\phi(T(z),a^*)}\\
        &=
        \bar{w}_q\left(\expectation{q_w(z)}{\phi(T(z)) - \phi(a^*) - \langle \nabla \phi(a^*), T(z) - a^* \rangle}\right)
        \\
        &=
        \bar{w}_q\left(\expectation{q_w(z)}{\phi(T(z))} - \phi(a^*) - \expectation{q_w(z)}{\langle \nabla \phi(a^*), T(z) \rangle} + \langle \nabla \phi(a^*), a^* \rangle\right)
        \\
        &=
        \bar{w}_q\left(\expectation{q_w(z)}{\phi(T(z))} - \phi(a^*)\right).
    \end{align*}
    Under the predictive distribution $p(z|d;\pi_d)$, the expected predictive uncertainty \eqref{eq:data_objective_explicit_min} can be expressed as
    \begin{align*}
        \EPU_{p(d, z; \pi_d)}^{\phi,T,w} &= \expectation{p(d;\pi_d)}{h_{\phi}^w[p(z|d;\pi_d)]}\\
        &=\expectation{p(d;\pi_d)}{\bar{w}(d,\pi_d)\left(\expectation{p_w(z|d;\pi_d)}{\phi(T(z))} - \phi(\expectation{p_w(z|d;\pi_d)}{T(z)})\right)}\\
        &=\bar{w} \,\mathbb{E}_{p_w(d;\pi_d)}\Big[
        \expectation{p_w(z | d;\pi_d)}{\phi(T(z))} - \phi(\expectation{p_w(z | d;\pi_d)}{T(z)}) \Big],
    \end{align*}
    which leads to the optimal policy $\pi_d^* = \argmin_{\pi_d\in\Pi_d}\EPU_{p(d, z; \pi_d)}^{\phi,T,w}$ as the form required.
\end{proof}

\subsection{Expected posterior uncertainty with weighted divergence}\label{app:weighted_EPU_proof}
\begin{corollary*}
    Assume the generalised entropy terms are well defined. Then, with $\bar w(c)=\expectation{p(z|c)}{w(z)}$ and $p_w(z|c)=w(z)\,p(z|c) / \bar w(c)$,
    \begin{align*}
    \mathrm{EPU}^{\phi,T,w}_{p(z,y|c,x)}=\bar w(c)\,
    \mathrm{EPU}^{\phi,T}_{p_w(z| c)\,p(y| x,z,c)},
    \end{align*}
    where the lack of the $w$ superscript in the second $\mathrm{EPU}$ is used to imply $w(z)=1, \forall z$.
    Moreover, averaging over contexts gives
    \begin{align*}
    \mathrm{EPU}^{\phi,T,w}_{p(c,y,z|x)}&=\bar w\,\mathrm{EPU}^{\phi, T}_{p_w(c)\,p_w(z| c)\,p(y| x,z,c)}, \\
    &= \bar w\,\mathrm{EPU}^{\phi, T}_{p_w(z)\,p(c,y| x,z)},
    \end{align*}
    where $\bar w=\mathbb{E}_{p(c)}[\bar w(c)]$ and $p_w(c)=\bar w(c)\,p(c)/\bar w$. The same identities hold with $\mathrm{EPU}$ replaced by $\mathrm{EUR}$.
\end{corollary*}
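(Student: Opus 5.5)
The plan is to reduce everything to the pointwise identity from the first part of the Theorem, $\uncertainty{\phi,T}{w}{q(z)}=\bar w_q\,\uncertainty{\phi,T}{}{q_w(z)}$. This holds because $\expectation{q(z)}{w(z)D_\phi(T(z),a)}=\bar w_q\,\expectation{q_w(z)}{D_\phi(T(z),a)}$ for every $a$, so the two minimisations over $\calA$ coincide up to the positive constant $\bar w_q$. After that, the argument is bookkeeping of Bayes' rule under reweighting.

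\textbf{Fixed context $c$.} Write the weighted EPU as $\expectation{p(y|x,c)}{\bar w(c,x,y)\,\uncertainty{\phi,T}{}{p_w(z|c,x,y)}}$, where $\bar w(c,x,y)=\expectation{p(z|c,x,y)}{w(z)}$. The key step is to identify the objects of the alternative model $\tilde p(z,y|c,x)=p_w(z|c)\,p(y|x,z,c)$.

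First, its posterior over $z$ is $\tilde p(z|c,x,y)\propto w(z)p(z|c)p(y|x,z,c)\propto w(z)p(z|c,x,y)$. Hence $\tilde p(z|c,x,y)=p_w(z|c,x,y)$.

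Second, its marginal over $y$ is
\[
\tilde p(y|c,x)=\int w(z)p(z|c)p(y|x,z,c)\,dz/\bar w(c)=p(y|c,x)\,\bar w(c,x,y)/\bar w(c).
\]
Substituting gives $\expectation{p(y|x,c)}{\bar w(c,x,y)\,h[\cdot]}=\bar w(c)\,\expectation{\tilde p(y|c,x)}{h[\tilde p(z|c,x,y)]}$, which is the first identity.

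\textbf{Averaging over $c$.} Take $\expectation{p(c)}{\cdot}$ of the first identity and write $p(c)\bar w(c)=\bar w\,p_w(c)$. This gives \eqref{eq:reweighted_prior_1} directly, with the joint $p_w(c)p_w(z|c)p(y|x,z,c)$.

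For \eqref{eq:reweighted_prior_2}, show that this joint equals $p_w(z)\,p(c,y|x,z)$. Indeed, $p_w(c)p_w(z|c)=w(z)p(c)p(z|c)/\bar w=w(z)p(z)p(c|z)/\bar w=p_w(z)p(c|z)$, where $p_w(z)=w(z)p(z)/\bar w$ and $\bar w=\expectation{p(z)}{w(z)}$ agrees with $\expectation{p(c)}{\bar w(c)}$. Multiplying by $p(y|x,z,c)$ gives $p_w(z)p(c,y|x,z)$. Since the two joints are the same distribution, their EPUs coincide. Here the EPU is understood, as in \Cref{sec:active_learning}, as conditioning on $(c,y)$ and averaging over their marginal.

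\textbf{EUR version.} The EUR is the prior term minus the EPU, so it suffices to check that the prior term transforms the same way. By the pointwise identity, $\expectation{p(c)}{\uncertainty{\phi,T}{w}{p(z|c)}}=\expectation{p(c)}{\bar w(c)\,\uncertainty{\phi,T}{}{p_w(z|c)}}=\bar w\,\expectation{p_w(c)}{\uncertainty{\phi,T}{}{p_w(z|c)}}$. This is exactly the prior term under the reweighted joint, and the fixed-$c$ version is immediate.

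The main obstacle is the identification of the marginal $\tilde p(y|c,x)$ together with the role of the assumption $p(y|x,c)$ versus $p(y|x,z,c)$. One must be careful that the reweighted model keeps the same likelihood $p(y|x,z,c)$ while its $y$-marginal changes by exactly the factor $\bar w(c,x,y)/\bar w(c)$. Integrability (``well defined'') is used only so that the interchanges of expectations above are legitimate.
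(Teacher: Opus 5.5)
Your proposal is correct and is essentially the paper's proof. The paper factors $w(z)$ out of the joint $p(z|c)\,p(y|x,z,c)$ to obtain $\bar w(c)$ times an expectation under $p_w(z|c)\,p(y|x,z,c)$. It then uses the reweighted Bayes rule to recognise $p_w(z|c,y,x)$ as that model's posterior, with $y$-marginal $q(y|x,c)=\mathbb{E}_{p_w(z|c)}[p(y|x,z,c)]$; this is exactly your identification of $\tilde p(z|c,x,y)$ and of $\tilde p(y|c,x)=p(y|c,x)\,\bar w(c,x,y)/\bar w(c)$. Your context averaging, your rewriting of the joint as $p_w(z)\,p(c,y|x,z)$, and your EUR step also match the paper.

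The only difference is cosmetic. You enter through the pointwise identity $h^w_{\phi,T}[q]=\bar w_q\,h_{\phi,T}[q_w]$ rather than writing out the weighted-mean Bayes act inside $D_\phi$. This has the small side benefit of showing that the argument never uses the Bregman form of the loss.
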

\begin{proof}
    The expected posterior uncertainty with weighted divergence can be expanded as
\begin{align*}
    \mathrm{EPU}^{\phi,T,w}_{p(z,y\mid c,x)}&=
    \mathbb{E}_{p(y\mid x,c)}\bigl[
    h^w_{\phi,T}\bigl[p(z \mid c,y,x)\bigr]\bigr]\\
    &=\E_{p(y\mid x, c)\,p(z \mid c, y,x)}\left[w(z)D_\phi\left(T(z),\E_{p_w(z \mid c,y,x)}[T(z)]\right)\right]\\
    &=\E_{p(z\mid c)\,p(y|x, z, c)}\left[w(z)D_\phi\left(T(z),\E_{p_w(z \mid c,y,x)}[T(z)]\right)\right]\\
    &=\E_{p(z\mid c)}[w(z)]\cdot\E_{p_w(z\mid c)\,p(y|x, z, c)}\left[D_\phi\left(T(z),\E_{p_w(z\mid c,y,x)}[T(z)]\right)\right].
\end{align*}
Note that $p_w(z\mid c,y,x)$ can also be expressed by the Bayes rule,
\begin{align}
    p_w(z\mid c,y,x)
    &=\frac{w(z)}{\mathbb{E}_{p(z\mid c,y,x)}\bigl[w(z)\bigr]}
    \cdot p(z\mid c,y,x)\nonumber
    \\
    &=\frac{w(z)}
            {\mathbb{E}_{p(z\mid c)}\!\Bigl[\,
            \frac{p(y\mid x,z,c)}
                    { \mathbb{E}_{p(z'\mid c)}\bigl[p(y\mid x,z',c)\bigr]}
            \,w(z)\Bigr]}
    \cdot
    \frac{p(z\mid c)\,p(y\mid x,z,c)}
        {\mathbb{E}_{p(z'\mid c)}\bigl[p(y\mid x,z',c)\bigr]}\nonumber
    \\
    &=\frac{w(z)}
            {\mathbb{E}_{p(z\mid c)}\!\Bigl[\,
            p(y\mid x,z,c)
            \,w(z)\Bigr]}
    \cdot
    p(z\mid c)\,p(y\mid x,z,c)\nonumber
    \\
    &=\frac{w(z)\,p(z\mid c)}
            {\mathbb{E}_{p(z\mid c)}\bigl[w(z)\bigr]}
    \cdot
    \frac{p(y\mid x,z,c)}
        {\mathbb{E}_{p_w(z\mid c)}\!\bigl[p(y\mid x,z,c)\bigr]}\nonumber\\
    &=p_w(z\mid c)
    \cdot
    \frac{p(y\mid x,z,c)}
        {\mathbb{E}_{p_w(z\mid c)}\!\bigl[p(y\mid x,z,c)\bigr]},\label{eq:rewighted_bayes_rule}
\end{align}
where the reweighted marginal likelihood over $y$ is shown in the form of
\begin{align*}
    q(y\mid x, c)\coloneqq\E_{p_w(z\mid c)}[p(y\mid x, z, c)].
\end{align*}
In this case, we can further simplify the EPU as
\begin{align*}
    \mathrm{EPU}^{\phi,T,w}_{p(z,y\mid c,x)}&=\E_{p(z\mid c)}[w(z)]\cdot\E_{q(y\mid x, c)\,p_w(z\mid c, y,x)}\left[D_\phi\left(T(z),\E_{p_w(z\mid c, y, x)}[T(z)]\right)\right]\\
    &=\E_{p(z\mid c)}[w(z)]\cdot\E_{q(y\mid x, c)}\left[h_{\phi, T}\left[p_w(z\mid c, y, x)\right]\right]\\
    &=\E_{p(z\mid c)}[w(z)]\cdot\text{EPU}^{\phi,T}_{q(y\mid x, c)\,p_w(z\mid c, y,x)}\\
    &=\bar w(c)\,\text{EPU}^{\phi,T}_{p_w(z\mid c)\,p(y\mid x, z, c)}
    .
\end{align*}
Averaging over contexts gives
\begin{align*}
    \mathrm{EPU}^{\phi,T,w}_{p(z,y,c|x)}&=\expectation{p(c)}{\bar w(c)\cdot\text{EPU}^{\phi,T}_{p_w(z\mid c)\,(y\mid x, z, c)}}\\
    &=\expectation{p(c)}{\bar w(c)}\cdot\expectation{p(c)}{\frac{\bar w(c)}{\expectation{p(c)}{\bar w(c)}}\cdot\text{EPU}^{\phi,T}_{p_w(z\mid c)\,p(y\mid x, z, c)}}\\
    &=\expectation{p(c)}{\bar w(c)}\cdot\expectation{p_w(c)}{\text{EPU}^{\phi,T}_{p_w(z\mid c)\,p(y\mid x, z, c)}}\\
    &=\bar w\,\mathrm{EPU}^{\phi,T}_{p_w(c)\,p_w(z| c)\,p(y| x,z,c)},
    \end{align*}
    where the joint weighted distribution can also be expressed as
    \begin{align*}
        p_w(c)\,p_w(z| c)\,p(y| x,z,c)&=\frac{\bar w(c)}{\bar w}\,p(c)\cdot\frac{w(z)}{\bar w(c)}\,p(z|c)\cdot p(y| x,z,c)\\
        &=\frac{w(z)}{\expectation{p(c)\,p(z|c)}{w(z)}}\,p(z)\,p(c|z)\,p(y| x,z,c)\\
        &=p_w(z)\,p(y,c|x,z).
    \end{align*}
    As $\uncertainty{\phi,T}{w}{p(z | c)}=\bar w(c) \uncertainty{\phi,T}{}{p_w(z | c)}$ and $\expectation{p(c)}{\uncertainty{\phi,T}{w}{p(z | c)}} = \bar w\,\expectation{p_w(c)}{\uncertainty{\phi,T}{}{p_w(z | c)}}$, we have that
    \begin{align*}
        \EUR_{p(y, z | x, c)}^{\phi,T,w} &= \uncertainty{\phi,T}{w}{p(z | c)} - \EPU_{p(y, z | x, c)}^{\phi,T,w} \\
        &= \bar w(c)\left(\uncertainty{\phi,T}{}{p_w(z | c)} - \text{EPU}^{\phi,T}_{p_w(z\mid c)\,p(y\mid x, z, c)}\right) \\
        &= \bar w(c)\,\text{EUR}^{\phi,T}_{p_w(z\mid c)\,p(y\mid x, z, c)}\\
        \EUR_{p(c, y, z | x)}^{\phi,T,w} &= \expectation{p(c)}{\uncertainty{\phi,T}{w}{p(z | c)}} - \EPU_{p(c, y, z | x)}^{\phi,T,w} \\
        &= \bar w\left(\expectation{p_w(c)}{\uncertainty{\phi,T}{}{p_w(z | c)}}-\mathrm{EPU}^{\phi,T}_{p_w(c)\,p_w(z| c)\,p(y| x,z,c)}\right)\\
        &=\bar w \, \mathrm{EUR}^{\phi,T}_{p_w(c)\,p_w(z| c)\,p(y| x,z,c)}\\
        &=\bar w \, \mathrm{EUR}^{\phi,T}_{p_w(z)\,p(y,c|x,z)}.\qedhere
    \end{align*}
\end{proof}

\subsection{Decomposition of evaluation discrepancy}\label{app:decomp_uncertainty_proof}
In practice we often evaluate a model against an \emph{external} system $p_{\mathrm{eval}}(z)$ (a data source, simulator, sensor, or benchmark), which need not coincide with the model's predictive $p(z)$.
Following the decision-theoretic view of externally grounded evaluation, the relevant score is the expected loss of the Bayes action under $p$, taken with respect to $p_{\mathrm{eval}}$ \citep[Sec.~5.4]{bickfordsmith2025rethinking}.
For Bregman losses, this score admits a clean two-term decomposition.

\begin{proposition}[Evaluation-discrepancy decomposition under Bregman loss]\label{prop:eval-decomp}
    Let $D_\phi$ be the Bregman divergence induced by a strictly convex differentiable $\phi$.
    Then
    \begin{align*}
        &\mathbb{E}_{p_{\mathrm{eval}}}\!\left[D_\phi\!\left(z,\,\mathbb{E}_{p}[z]\right)\right]=
    \underbrace{D_\phi\!\left(\mathbb{E}_{p_{\mathrm{eval}}}[z],\,\mathbb{E}_{p}[z]\right)}_{\mathrm{estimation\ error}}
    +    \underbrace{\mathbb{E}_{p_{\mathrm{eval}}}\!\left[D_\phi\!\left(z,\,\mathbb{E}_{p_{\mathrm{eval}}}[z]\right)\right]}_{\mathrm{irreducible\ dispersion}}.
    \end{align*}
\end{proposition}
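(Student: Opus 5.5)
Write $\mu := \mathbb{E}_{p}[z]$ and $\nu := \mathbb{E}_{p_{\mathrm{eval}}}[z]$. The plan is to expand every Bregman divergence in the statement from its definition, $D_\phi(x,y)=\phi(x)-\phi(y)-\langle\nabla\phi(y),x-y\rangle$, and check that the two sides agree after taking expectations under $p_{\mathrm{eval}}$. We assume that $\nu$ and $\mu$ both lie in the interior of the domain where $\phi$ is differentiable, and that $\mathbb{E}_{p_{\mathrm{eval}}}[\phi(z)]$ and $\nu$ are finite. These are the same integrability conditions used in \Cref{thm:expected_bregman_minimiser}, and they make every expectation below well defined and allow linearity to be used freely.

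\textbf{Left-hand side.} Expanding pointwise and taking the expectation, the inner product is linear in $z$, so it passes through the expectation:
\begin{align*}
\mathbb{E}_{p_{\mathrm{eval}}}\!\left[D_\phi(z,\mu)\right] = \mathbb{E}_{p_{\mathrm{eval}}}[\phi(z)] - \phi(\mu) - \langle \nabla\phi(\mu), \nu - \mu\rangle .
\end{align*}

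\textbf{Irreducible-dispersion term.} By the same computation that appears in the proof of the main theorem, the linear term vanishes because $\mathbb{E}_{p_{\mathrm{eval}}}[z-\nu]=0$. This leaves the Jensen gap
\begin{align*}
\mathbb{E}_{p_{\mathrm{eval}}}[\phi(z)] - \phi(\nu).
\end{align*}

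\textbf{Estimation-error term.} By definition,
\begin{align*}
D_\phi(\nu,\mu)=\phi(\nu)-\phi(\mu)-\langle\nabla\phi(\mu),\nu-\mu\rangle .
\end{align*}

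Adding the estimation-error and dispersion terms, the $\pm\phi(\nu)$ contributions cancel. What remains is exactly the expression obtained for the left-hand side, which proves the identity. This is the Bregman analogue of the bias–variance decomposition, and the equality holds for any fixed point in place of $\mu$.

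The only real obstacle is justifying that linearity of the inner product can be exchanged with the expectation. This requires $\nu$ to be finite, and, in the functional setting of \Cref{sec:functional_bregman}, it requires that $\varphi_\mu$ act as a bounded linear functional. That is exactly the regularity assumed in \Cref{sec:functional_bregman}, so the same argument carries over verbatim with $\langle\cdot,\cdot\rangle$ replaced by the integral pairing.
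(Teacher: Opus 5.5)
Your proof is correct and is essentially the paper's argument. The paper expands the ``evaluation discrepancy'' $\mathbb{E}_{p_{\mathrm{eval}}}[D_\phi(z,\mu) - D_\phi(z,\nu)]$, with $\mu=\mathbb{E}_{p}[z]$ and $\nu=\mathbb{E}_{p_{\mathrm{eval}}}[z]$ as in your notation, reduces the second term to the Jensen gap, and obtains $D_\phi(\nu,\mu)$; this is your definitional expansion written as a difference rather than a sum. Your remarks on the integrability and domain conditions, and that the identity holds with any fixed point in place of $\mu$, are correct details the paper leaves implicit.
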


The left-hand side of \Cref{prop:eval-decomp} (see \cref{app:decomp_uncertainty_proof} for a proof) is the \emph{expected Bregman loss} of acting with the model's Bayes action when reality is generated by $p_{\mathrm{eval}}$.
The right-hand side separates:
(i) a \emph{reducible} term—how far the model's Bayes act $\mathbb{E}_{p}[z]$ is from the evaluation Bayes act $\mathbb{E}_{p_{\mathrm{eval}}}[z]$ in the geometry of $\phi$; and
(ii) an \emph{irreducible} term—the Jensen gap of $p_{\mathrm{eval}}$ under $\phi$, i.e., the data dispersion intrinsic to the evaluation source.
Thus, improvement under an external metric comes \emph{only} by shrinking the first term; the second is fixed by $p_{\mathrm{eval}}$ and the chosen loss.
This generalises the evaluation identities in \citet{bickfordsmith2025rethinking} from squared and log loss to \emph{any} Bregman divergence, unifying bias-variance and cross-entropy decompositions within Bregman-geometric framework within one decision-theoretic statement.
\begin{proof}

Following \citet{bickfordsmith2025rethinking}, define the evaluation discrepancy
\begin{align*}
    d(p,p_{\text{eval}})=\E_{p_{\text{eval}}}[\ell(z,a^\ast)-\ell(z,a_{\text{eval}})],
\end{align*}
where $a^\ast$ and $a_{\text{eval}}$ are the minimisers of expected loss over $p$ and $p_{\text{eval}}$, respectively. When the loss is in the form of a Bregman divergence, we can further reduce the expression as
\begin{align*}
    d(p,p_{\text{eval}})&=\E_{p_{\text{eval}}}\left[D_\phi\left(z,\E_{p}(z)\right)-D_\phi\left(z,\E_{p_{\text{eval}}}(z)\right)\right]\\
    &=\E_{p_{\text{eval}}}\left[\phi(z) - \phi\left(\E_{p(z)}[z]\right)-\left\langle\nabla\phi\left(\E_{p(z)}[z]\right),z-\E_{p(z)}[z]\right\rangle\right]-\left(\E_{p_{\text{eval}}}\left[\phi(z)\right] - \phi\left(\E_{p_{\text{eval}}}[z]\right)\right)\\
    &=\E_{p_{\text{eval}}}\left[\phi(z)\right]-\phi\left(\E_{p(z)}[z]\right)-\left\langle\nabla\phi\left(\E_{p(z)}[z]\right),\E_{p_{\text{eval}}}[z]-\E_{p(z)}[z]\right\rangle-\left(\E_{p_{\text{eval}}}\left[\phi(z)\right] - \phi\left(\E_{p_{\text{eval}}}[z]\right)\right)\\
    &=D_\phi\left(\E_{p_{\text{eval}}}[z],\E_{p(z)}[z]\right).
\end{align*}
Rearrange the first and the last equation, we have that
\begin{align*}
    \E_{p_{\text{eval}}}\left[D_\phi\left(z,\E_{p}(z)\right)\right] = D_\phi\left(\E_{p_{\text{eval}}}[z],\E_{p(z)}[z]\right) + \underbrace{\E_{p_{\text{eval}}}\left[D_\phi\left(z,\E_{p_{\text{eval}}}(z)\right)\right]}_{\text{irreducible}},
\end{align*}
which aligns with the bias-variance decomposition in Bregman divergence (\cref{eq:bv-decomp}). For any (possibly random) predictor $A$, define the \emph{central prediction}
$\mathcal{E}_\phi[A]\!:=\!\arg\min_{u}\,\mathbb{E}_A[D_\phi(u,A)]$.
Then (e.g., \citealp{pfau2013generalised})
\begin{align}
  \mathbb{E}_{Z,A}\!\left[D_\phi(Z,A)\right]
  &= \mathbb{E}_Z\!\left[D_\phi\!\left(Z,\mathbb{E}[Z]\right)\right]
   + \mathbb{E}_A\!\left[D_\phi\!\left(\mathbb{E}[Z],A\right)\right] \nonumber\\
  &= \underbrace{\mathbb{E}_Z\!\left[D_\phi\!\left(Z,\mathbb{E}[Z]\right)\right]}_{\text{Bayes error}} + \underbrace{D_\phi\!\left(\mathbb{E}[Z],\mathcal{E}_\phi[A]\right)}_{\text{bias}}+ \underbrace{\mathbb{E}_A\!\left[D_\phi\!\left(\mathcal{E}_\phi[A],A\right)\right]}_{\text{model variance}}\label{eq:bv-decomp}.
\end{align}
\end{proof}

\subsection{Uncertainty-based acquisition via Bregman divergences recovers predictive-variance reduction, BALD, and EPIG}\label{app:epu_retrive}

\paragraph{Predictive posterior variance \citep{cohn1994active}}

Set the Bregman potential to the quadratic $\phi(u)=u^2$, take the quantity of interest to be the latent response $z=f(c)$ at a context $c$, and let the context distribution be the input distribution $p(c)$.
Then the Bregman uncertainty is the predictive variance,
$h_\phi[p(z\mid c)]=\text{Var}_n\!\big(z\mid c\big)$,
and the pool-based EPU at candidate $x$ is
\begin{align*}
    \mathrm{EPU}(x)
    =\E_{p(c)\,p(y\mid x,c)}\big[\text{Var}_{p(z\mid c, y,x)}[z]\big],
\end{align*}
which retrieves a Bayesian variant of the objective in \cite{cohn1994active}.

\paragraph{BALD \citep{houlsby2011bayesian}}

Choose the log score (entropy) potential $\phi$ and set the target to the model parameters $z=\theta$.
The pool-based EUR becomes the expected drop in entropy of $\theta$,
\begin{align*}
    \mathrm{EUR}(x)
    =\E_{p(y\mid x)}\!\big[\,\mathrm H\big[p(\theta)\big]-\mathrm H\big[p(\theta\mid x,y)\big]\big]
    = \mathrm I(\theta; y\mid x),
\end{align*}
which is the BALD mutual-information objective \cite{bickfordsmith2023prediction}.

\paragraph{EPIG \citep{bickfordsmith2023prediction}}

With the same entropy potential, take the target-context pair $(z,c)=(y_\ast,x_\ast)$ and average the context-aware EUR over the target-input distribution $p(x_\ast)$ (cf. \cref{sec:method}):
\begin{align*}
    \E_{p(x_\ast)}[\mathrm{EUR}(x;x_\ast)]=
    \mathbb{E}_{p(x_\ast)\,p(y\mid x)}\bigl[\mathrm{H}\bigl[p(y_\ast \mid x_\ast)\bigr]
    - \mathrm{H}\bigl[p(y_\ast \mid x_\ast, x,y)\bigr]\bigr],
\end{align*}
which is precisely the Expected Predictive Information Gain (EPIG).
    \section{ESTIMATION}\label{sec:estimation}

\subsection{Computing EUR by mutual information for classification tasks}\label{app:clf_estimation}
We start by deriving the weighted version of EPIG:
\begin{align*}
    \mathrm{EPIG}_w&=\expectation{p(c)}{\mathrm{EUR}^{\phi,T, w}_{p(z,y|c,x)}(x;c)}\\
    &= \expectation{p(c)}{\uncertaintyphiw{p(z|c)} - \expectation{p(y|x,c)}{\uncertaintyphiw{p(z|c,x,y)}}}\\
    &=\expectation{p(c)}{\expectation{p(z|c)}{-w(z)\log p_w(z|c)} - \expectation{p(z,y|c,x)}{-w(z)\log p_w(z|c,y,x)}}\\
    &= \expectation{p(c)p(z,y|c,x)}{w(z)\log\frac{p_w(z|c,y,x)}{p_w(z|c)}}\\
    &=\mathbb{E}_{p(c)\,p(z\mid c)\,p(y\mid x,z,c)}
    \Bigl[w(z)\,\log\frac{p(y\mid x,z,c)}{q(y\mid x,c)}\Bigr],
\end{align*}
where the last equation is derived by \cref{eq:rewighted_bayes_rule}.
In terms of implementation, we use the formula for mutual information directly to reduce computational cost.

\begin{align*}
    \mathrm{EPIG}_w
    &=\mathbb{E}_{p(c)\,p(z\mid c)\,p(y\mid x,z,c)}
    \Bigl[w(z)\,\log\frac{p(y\mid x,z,c)}{q(y\mid x,c)}\Bigr]
    \\
    &=\mathbb{E}_{p(c)\,p(z\mid c)\,p(y\mid x,z,c)}
    \left[w(z)\,\log\frac{p(y\mid x,z,c)}
    {\mathbb{E}_{p_w(z\mid c)}\bigl[p(y\mid x,z,c)\bigr]}\right],
    \\
    &=\mathbb{E}_{p(c)\,p(z, y\mid c,x)}
    \left[
      w(z)\,\log\frac{p(y\mid x,z,c)}
      {\mathbb{E}_{p(z\mid c)}\bigl[\frac{w(z)}{\mathbb{E}_{p(z\mid c)}[w(z)]}\,p(y\mid x,z,c)\bigr]}
    \right]\\
    &=\mathbb{E}_{p(c)\,p(z, y\mid c,x)}
    \left[
      w(z)\,\log\frac{w(z)p(z, y\mid c, x)}
      {\frac{w(z)}{\mathbb{E}_{p(z\mid c)}[w(z)]}p(z\mid c)\mathbb{E}_{p(z\mid c)}[w(z)\,p(y\mid x,z,c)]}
    \right]\\
    &=\mathbb{E}_{p(c)}\left[\sum_{z, y}w(z)\,p(z, y\mid c, x)\,\log\frac{w(z)p(z, y\mid c, x)}
      {p_w(z\mid c)\sum_z w(z)\,p(z, y\mid c, x)}\right].
\end{align*}
Following the notation and strategy in \cite{bickfordsmith2023prediction}, suppose we have samples from $\theta_i\sim p(\theta)$ and $c^j\sim p(c)$. The predictive distribution needed can be estimated as
\begin{align*}
    \hat{p}_n\left(z, y\mid c^j, x\right) & =\frac{1}{K} \sum_{i=1}^{K} p\left(z \mid c^{j}, \theta_{i}\right)\,p\left(y \mid x, \theta_{i}\right) \\
    \hat{p}_n\left(z \mid c^{j}\right) & =\frac{1}{K} \sum_{i=1}^{K} p\left(z \mid c^{j}, \theta_{i}\right)\\
    \hat p_w(z\mid c^j) &= \frac{w(z)}{\sum_z \hat p(z\mid c^j)w(z)}\cdot\hat p(z\mid c^j).
\end{align*}

\subsection{Exact one-step posterior update for variance-based acquisition}\label{app:reg_estimation}
We consider the GP regression setting from \cref{sec:exp_reg}. Let $\mathcal{D}_n=\{(x_i,y_i)\}_{i=1}^n$ with inputs $X_n=[x_1,\dots,x_n]^\top$ and responses $\bm y=[y_1,\dots,y_n]^\top$. The prior model is defined as
\begin{align*}
    f \sim \mathcal{GP}(0,k),\qquad y_i=f(x_i)+\varepsilon_i,\ \ \varepsilon_i\stackrel{\text{i.i.d.}}{\sim}\mathcal N(0,\sigma^2),
\end{align*}
with fixed hyperparameters. Denote by
\begin{align*}
    A_n := K_{X_nX_n}+\sigma^2 I,\qquad
    \bm k(x,X_n):=\big[k(x,x_1),\ldots,k(x,x_n)\big]^\top ,
\end{align*}
and write the posterior predictive mean and (co)variance under $\mathcal{D}_n$ as
\begin{align*}
    m_n(x) &= \bm k(x,X_n)^\top A_n^{-1}\bm y,\\
    v_n(x,x') &= k(x,x')-\bm k(x,X_n)^\top A_n^{-1}\bm k(x',X_n),\qquad v_n(x):=v_n(x,x).
\end{align*}

Let $x^+$ be a candidate pooled input and $y^+$ the (as yet unobserved) label at $x^+$.
The one\textendash step posterior after hypothetically observing $(x^+,y^+)$ is Gaussian with
\citep[Sec.~2.4]{williams2006gaussian}
\begin{align}
    m_{n+1}(x) &= m_n(x) + \beta_n(x;x^+)\,\big(y^+-m_n(x^+)\big),\label{eq:gp-mean-update}\\
    v_{n+1}(x,x') &= v_n(x,x') - \beta_n(x;x^+)\,\beta_n(x';x^+)\,\tau_n^2(x^+),\label{eq:gp-cov-update}
\end{align}
where
\begin{align*}
    \beta_n(x;x^+) := \frac{\mathrm{cov}_n\!\big(f(x),y^+\big)}{\mathrm{var}_n(y^+)}
    =\frac{v_n(x,x^+)}{v_n(x^+)+\sigma^2},\qquad
    \tau_n^2(x^+) :=v_n(x^+)+\sigma^2 .
\end{align*}

\paragraph{Expected variance reduction (EVR)}
Let $c\sim p(c)$ denote the distribution over contexts (test inputs) for which we evaluate uncertainty
(\cref{sec:method}). Since $v_{n+1}(x,x)$ in~\eqref{eq:gp-cov-update} does not depend on the realised $y^+$,
$\mathbb E_{p(y^+\mid x^+)}[v_{n+1}(c,c)]=v_{n+1}(c,c)$.
Thus the squared\textendash loss EUR (EVR) for candidate $x^+$ can be approximated by samples of $c$:
\begin{align}
    \mathrm{EVR}(x^+) 
    \approx\frac{1}{M}\sum_{j=1}^M\!\Big(v_n(c_j)-\mathbb E_{p(y^+\mid x^+)}\!\big[v_{n+1}(c_j,c_j)\big]\Big)
    = \frac{1}{M}\sum_{j=1}^M\frac{v_n(c_j,x^+)^2}{v_n(x^+)+\sigma^2}.\label{eq:evr-closed-form}
\end{align}
This is exactly the expected reduction in the Jensen gap $h_\phi$ for $\phi(x)=x^2$, averaged over contexts, and corresponds to the unweighted EUR in \cref{sec:method}. This score is also equivalent (up to constants) to minimising $\mathrm{tr}\left(\mathrm{Var}[f_A\mid D,y_x]\right)$ in variance-based transductive active learning (VTL), with $A$ corresponding to the context set \citep{hubotter2024transductive}.

\paragraph{Weighted expected variance reduction ($\textbf{EVR}_w$)}

To encode region-specific preferences, fix $w:\mathbb R\to(0,\infty)$ and, for a context $c\sim p(c)$, define the weighted Bregman uncertainty (squared-error case) under the current predictive $z\sim\mathcal N\!\big(m_n(c),v_n(c)\big)$:
\begin{align*}\label{eq:wqu-def}
    U_w^{(n)}(c)
    := \mathbb E\!\left[w(z)\big(z-\mu_w^{(n)}(c)\big)^2\right],\qquad
    \mu_w^{(n)}(c):=\frac{\mathbb E\!\left[w(z)\,z\right]}{\mathbb E\!\left[w(z)\right]} .
\end{align*}
By the reweighting identity in \cref{sec:method}, $U_w^{(n)}(c)=\bar w_n(c)\cdot \mathrm{Var}_{p_w} (z)$ with $p_w(\cdot)\propto w(\cdot)\,\mathcal N(m_n(c),v_n(c))$ and $\bar w_n(c)=\mathbb E_{p}[w(z)]$; for general $w$ the moments are not closed form, so we estimate them by Monte Carlo.

After updating with $(x^+,y^+)$, $z\mid y^+\sim \mathcal N\big(m_{n+1}(c),\,v_{n+1}(c)\big)$ with mean/variance from \cref{eq:gp-mean-update,eq:gp-cov-update}. We approximate the expectation over the GP predictive law $p(y^+\mid x^+)=\mathcal N\big(m_n(x^+),\tau_n^2(x^+)\big)$ using common random numbers:
\begin{align*}
    y^{+,(j)} &= m_n(x^+)+\sqrt{\tau_n^2(x^+)}\,\eta^{(j)},\qquad \eta^{(j)}\stackrel{\text{i.i.d.}}{\sim}\mathcal N(0,1),\\
    z_{\cdot\mid +}^{(i,j)} &=m_n(c)+\beta_n(c_j;x^+)\sqrt{\tau_n^2(x^+)}\,\eta^{(j)}+\sqrt{v_{n+1}(c_j)}\,\varepsilon^{(i)},\qquad \varepsilon^{(i)}\stackrel{\text{i.i.d.}}{\sim}\mathcal N(0,1),
\end{align*}
reusing the same $\{\eta^{(j)}\}$ and $\{\varepsilon^{(i)}\}$ across candidates $x^+$ to reduce estimator variance.
By jointly sampling from $(y^+,z, c)$, the expected posterior weighted uncertainty is approximated by
\begin{align*}
    \mathbb E_{p(y^+\mid x^+)}\!\left[U_w^{(n+1)}(c_j)\right]\approx
    \frac{1}{S}\sum_{i=1}^{S}
    w\big(z_{\cdot\mid +}^{(i,j)}\big)\left(z_{\cdot\mid +}^{(i,j)}-\mu_{w}^{(j)}(c)\right)^2,
    \quad
    \mu_{w}^{(j)}(c):=\frac{\sum_{i=1}^{S} w\big(z_{\cdot\mid +}^{(i,j)}\big)\,z_{\cdot\mid +}^{(i,j)}}{\sum_{i=1}^{S} w\big(z_{\cdot\mid +}^{(i,j)}\big)}.
\end{align*}
The weighted EUR score averages the reduction across contexts:
\begin{align*}
    \mathrm{EVR}_w(x^+)\approx\frac{1}{M}\sum_{j=1}^M
    \Big(
    U_w^{(n)}(c_j)-\mathbb E_{p(y^+\mid x^+)}\!\big[U_w^{(n+1)}(c_j)\big]
    \Big).
\end{align*}
\emph{Remark.} For exponential weights $w(z)=\exp(\alpha z)$, the reweighting is analytic:
$p_w=\mathcal N\big(m_n(c)+\alpha v_n(c),\,v_n(c)\big)$,
$\bar w_n(c)=\exp\!\big(\alpha m_n(c)+\tfrac{1}{2}\alpha^2 v_n(c)\big)$,
and $U_w^{(n)}(c)=\bar w_n(c)\,v_n(c)$, yielding a closed\textendash form pre\textendash update term.

\paragraph{Implementation notes}

Compute and cache the Cholesky $A_n= L L^\top$.
For any candidate $x^+$, obtain
\begin{align*}
    s_n(\mathcal C,x^+):=v_n(\mathcal C,x^+)=k(\mathcal C,x^+)-K_{\mathcal C X_n}\,A_n^{-1}\bm k(X_n,x^+),
    \qquad
    \tau_n^2(x^+)=v_n(x^+)+\sigma^2,
\end{align*}
then evaluate \eqref{eq:evr-closed-form} via $\mathrm{EVR}(x^+)=\tfrac{1}{M}\sum_{j=1}^M s_n(c_j,x^+)^2/\tau_n^2(x^+)$.
The same cached solves provide $\beta_n(c;x^+)=s_n(c,x^+)/\tau_n^2(x^+)$ and $v_{n+1}(c)=v_n(c)-s_n(c,x^+)^2/\tau_n^2(x^+)$ for the weighted estimator above; only the nonlinear weighting $w(\cdot)$ requires Monte Carlo. Note that we compute $\text{EVR}_w$ to empirically demonstrate its non-negativity. To reduce computation, one could also minimise expected predictive posterior score over contexts.

\subsection{General techniques for variance-based uncertainty estimation}
\label{app:variance-estimation}

We briefly simplify the objectives used to score a candidate query $x$ and provide low-variance Monte Carlo estimators.

\paragraph{Setup.}
Let $c\sim p(c)$ denote a context (e.g.\ a test input), $z\in\mathbb R$ a scalar predictive target, and $p(\cdot)$ the predictive distribution after $n$ observations. We consider the weighted squared-loss geometry with a strictly positive weight function $w:\mathbb R\to (0,\infty)$ and the reweighted belief
\[
p_w(z\mid c)\;\propto\; w(z)\,p(z\mid c), \qquad 
\bar w(c)\;:=\;\mathbb E_{p(z\mid c)}[w(z)].
\]
In this geometry the uncertainty is the non-negative Jensen gap
\(
h_\varphi[p(z\mid c)]=\mathrm{Var}_{p(z\mid c)}[z]
\)
(and its weighted counterpart $\bar w(c)\,\mathrm{Var}_{p_w(z\mid c)}[z]$), so EUR reduces to (weighted) expected predictive-variance reduction; see Secs.~3.3–3.4 and \cref{app:epu_retrive}.

\paragraph{EVR.}
The unweighted expected variance reduction is
\begin{align*}
\mathrm{EVR}(x)
&=\mathbb E_{p(c)}\;\mathbb E_{p(y\mid x)}
\!\left[\mathrm{Var}\!\left(z\mid c\right)-\mathrm{Var}\!\left(z\mid c,y,x\right)\right] \\
&=\text{constant}+\mathbb E_{p(c)}\;\mathbb E_{p(y\mid x)}
\!\left[\big(\mathbb E_{p(z\mid c,y,x)}[z]\big)^2\right].
\end{align*}
Using two i.i.d.\ replicates $z,z'\stackrel{\text{i.i.d.}}{\sim}p(z\mid c,y,x)$ as per~\citet{rainforth2018nesting} yields $\big(\mathbb E[z]\big)^2=\mathbb E[zz']$ and thus
\begin{align*}
\mathrm{EVR}(x)&=\text{constant}+\mathbb E_{p(c)}\;\mathbb E_{p(y\mid x)}\;\mathbb E_{p(z\mid c,y,x)p(z'\mid c,y,x)}\!\left[zz'\right], \\
&=\text{constant}+\mathbb E_{p(c)}\;\mathbb E_{p(z|c)}\;\mathbb E_{p(y\mid x, z, c)p(z'\mid c,y,x)}\!\left[zz'\right], \\
&=\text{constant}+\mathbb E_{p(c)}\;\mathbb E_{p(\theta)p(z\mid\theta,c)p(y\mid x,\theta,c)}\;\mathbb E_{p(\theta'\mid y,x) p(z'\mid \theta', c)}\!\left[zz'\right], \\
&=\text{constant}+\mathbb E_{p(c)}\;\mathbb E_{p(\theta)p(y\mid x,\theta,c)p(z\mid \theta, c)}\;\mathbb E_{p(\theta')p(z'\mid \theta', c)}
\!\left[\frac{p(y\mid x, \theta', c)}{\mathbb{E}_{p(\theta'')}\left[p(y\mid x, \theta', c)\right]}  zz'\right], \\
&=\text{constant}+\mathbb E_{p(c)}\;\mathbb E_{p(\theta)p(y\mid x,\theta,c)}\;\mathbb E_{p(\theta')}
\!\left[\frac{p(y\mid x, \theta', c)}{\mathbb{E}_{p(\theta'')}\left[p(y\mid x, \theta'', c)\right]} \mathbb{E}_{p(z\mid \theta, c)} \left[z\right] \mathbb{E}_{p(z'\mid \theta', c)} \left[z'\right]\right].
\end{align*}
Either the final or penultimate line can now be estimated using nested Monte Carlo~\citep{rainforth2018nesting} or adapting a contrastive estimator like PCE~\citep{foster2020unified}, while if we have a mechanism for drawing (approximate) samples from $p(z'\mid c,y,x)$, then we can use the second line directly instead.
One can also replace the sampling from any of the prior terms, e.g.~$p(\theta''|c)$, with an appropriate importance sampler instead if desired.

\paragraph{Weighted EVR.}
With prediction-space weighting, the objective becomes
\begin{align*}
\mathrm{EVR}_w(x)
&=\mathbb E_{p(c)}\;\mathbb E_{p(y\mid x)}
\!\left[\bar w(c)\,\mathrm{Var}_{p_w(z\mid c)}[z]\;-\;\bar w(c,y,x)\,\mathrm{Var}_{p_w(z\mid c,y,x)}[z]\right]\\
&=\text{constant} - \mathbb E_{p(c)}\;\mathbb E_{p(y\mid x)}
\!\left[\bar w(c,y,x)\,\mathrm{Var}_{p_w(z\mid c,y,x)}[z]\right].
\end{align*}
Using $\bar w\,\mathrm{Var}_q[z]
=\mathbb E_{p}[w(z)z^2]-\frac{\big(\mathbb E_{p}[w(z)z]\big)^2}{\mathbb E_{p}[w(z)]}$ (where expectations are under the indicated $p(z\mid\cdot)$), 
and since $\mathbb E_{p(y\mid x)}\!\big[\mathbb E_{p(z\mid c,y,x)}[w(z)z^2]\big]$ is constant in $x$ by marginalisation, we then have
\begin{align*}
 &\mathrm{EVR}_w(x)-\text{constant}=
\;+\;\mathbb E_{p(c)}\;\mathbb E_{p(y\mid x)}
\!\left[\frac{\big(\mathbb E_{p(z\mid c,y,x)}[w(z)z]\big)^2}{\mathbb E_{p(z\mid c,y,x)}[w(z)]}\right] \\
&= \mathbb E_{p(c)}\;\mathbb E_{p(y\mid x)p(z\mid c,y,x)p(z'\mid c,y,x)}
\!\left[\frac{zz'w(z)w(z')}{\mathbb E_{p(\theta''|y,x,c)p(z\mid \theta'', c)}[w(z)]}\right] \\
&= \mathbb E_{p(c)}\;\mathbb E_{p(\theta)p(y\mid x,\theta,c)p(z\mid \theta,c)} \mathbb{E}_{p(\theta'\mid c,y,x)p(z'|\theta',c)}
\!\left[\frac{zz'w(z)w(z') \mathbb{E}_{p(\theta'')}\left[p(y|x,\theta'',c)\right]}{\mathbb E_{p(\theta'')p(z\mid \theta'', c)}[w(z)p(y|x,\theta'',c)]}\right] \\
&= \mathbb E_{p(c)}\;\mathbb E_{p(\theta)p(y\mid x,\theta,c)p(z\mid \theta,c)} \mathbb{E}_{p(\theta')p(z'|\theta',c)}
\!\left[\frac{p(y|x,\theta',c)}{\mathbb E_{p(\theta'')p(z\mid \theta'', c)}[w(z)p(y|x,\theta'',c)]}zz'w(z)w(z')\right]\\
&= \mathbb E_{p(c)}\;\mathbb E_{p(\theta)p(y\mid x,\theta,c)} \mathbb{E}_{p(\theta')}
\!\left[\frac{p(y|x,\theta',c)}{\mathbb E_{p(\theta'')p(z\mid \theta'', c)}[w(z)p(y|x,\theta'',c)]} \mathbb{E}_{p(z\mid \theta,c)}[zw(z)] \mathbb{E}_{p(z'\mid \theta',c)}[z'w(z')] \right].
\end{align*}
We can now again do a nested Monte Carlo or PCE-style estimator for the nested estimations given in the last and penultimate lines in the same way as for the unweighted case with the same underlying computational complexity.

\subsection{Method complexity}

\paragraph{Classification.}
The weighted EPIG estimator, $\text{EPIG}_w$ (see \cref{app:clf_estimation}), has per–candidate-input complexity $\mathcal{O}(MK)$, where $M$ is the number of Monte Carlo draws over contexts and joint predictive outputs, and $K$ is the number of samples of model parameters. This matches the asymptotic cost of standard (unweighted) EPIG estimators \cite{bickfordsmith2023prediction}.

\paragraph{Regression.}
For regression we use expected variance reduction (EVR) and its weighted counterpart ($\text{EVR}_w$). In general, both have complexity $\mathcal{O}(MS)$, with $S$ inner Monte Carlo draws used to estimate weighted posterior means and/or variances. In models that admit an exact one-step posterior update—e.g., exact Gaussian processes where the predictive variance after a single observation is available in closed form—the per–candidate complexity reduces to $\mathcal{O}(M)$ because no parameter sampling is required.

    \section{EXPERIMENT DETAILS}\label{sec:experiment_details}

\subsection{Metrics}\label{app:metrics}
Our primary metrics match the losses assumed in the acquisition objective. 
For classification we report negative log likelihood (NLL) and its weighted counterpart,
\begin{equation*}
    \mathrm{NLL}_w=\frac{\mathbb{E}_{p_{\mathrm{eval}}(y,x)}\!\big[w(y)\, \big(-\log q_{n+m}(y\mid x)\big)\big]}
     {\mathbb{E}_{p_{\mathrm{eval}}(y)}[w(y)]}
     .
\end{equation*}
For regression we report squared-error loss (SEL) and its weighted counterpart,
\begin{equation*}
    \mathrm{SEL}_w=\frac{\mathbb{E}_{p_{\mathrm{eval}}(y,x)}\!\big[w(y)\, \left(y-\mathbb{E}_{q_{n+m(y|x)}}[y]\right)^2\big]}
     {\mathbb{E}_{p_{\mathrm{eval}}(y)}[w(y)]}.
\end{equation*}
Note that we do not record accuracy as an evaluation metric, as accuracy implies a zero-one loss that we do not optimise for during data acquisition. Therefore, there is no reason for acquisition methods to dominate on accuracy-related metrics.

In addition, we do not claim that each benchmark has a uniquely correct weighting. Rather, the experiments are designed to test whether the acquisition rule can be customised to different user-specified downstream preferences. Accordingly, all weights are fixed a priori from simple task-motivated heuristics and are not tuned on validation performance.

\subsection{GP model setting}\label{app:gp_model_setting}

We use a Gaussian process (GP) regressor with a constant prior mean $m(x)\equiv c$ set to the median of the observed responses; we train on $y-c$ and add $c$ back at prediction. Inputs are standardized feature‑wise once on the initial train+pool set and the same transform is reused thereafter; the target is left in native units. The kernel is a sum of a linear (dot‑product) term and a Matérn term with i.i.d.\ white noise, i.e.,
$m(x)\equiv c_t$ and
$k(x,x')=\sigma_{\mathrm{lin},t}^2\,z^\top z' + \sigma_{f,t}^2\,\kappa_{\nu}\!\big(\|z-z'\|/\ell_t\big) + \sigma_{n,t}^2\,\delta_{x,x'}$,
where $z$ denotes standardized inputs (we standardize $X$ only), $\kappa_\nu$ is the Matérn correlation (we use $\nu=3/2$ unless noted), and $\delta_{x,x'}$ is the Kronecker delta \citep{williams2006gaussian,Stein1999}.
This is the usual universal‑kriging decomposition of a linear trend plus a stationary residual \citep{Cressie1993,DiggleRibeiro2007}.  Hyperparameters $(c_t,\sigma_{\text{lin},t}^2,\sigma_{f,t}^2,\sigma_{n,t}^2,\ell_t)$ are recomputed by robust plug‑in rules every three acquisitions and held fixed in between; no marginal-likelihood maximisation or other gradient-based tuning is used.
We fix $\nu{=}5/2$ on \textsc{Yacht} and $\nu{=}3/2$ on \textsc{Slump}/\textsc{Estate}; $\nu{=}5/2$ induces smoother (twice m.s.\ differentiable) sample paths than $\nu{=}3/2$ (once m.s.\ differentiable), which we found to better match the hydrodynamics response, whereas the latter is more robust for noisier, less‑smooth tabular targets.

\subsection{Robust hyperparameter estimation}\label{app:hyperparam_estimation}

Following the GP setting above, we set the location $c_t=\mathrm{median}(y)$ (robust location) and estimate scales by robust/difference‑based rules: (i) the observation noise is estimated from nearest‑neighbor differences in standardized input space,
$\sigma_{n,t}=\tfrac{1.4826}{\sqrt{2}}\ \mathrm{median}_i\,|y_i-y_{j(i)}|$, $j(i)=\arg\min_{j\neq i}\|z_i-z_j\|$, a difference‑based variance estimator in the spirit of \citet{Rice1984} and \citet{HallKayTitterington1990}, with the MAD‑to‑$\sigma$ factor $1.4826$ ensuring normal Fisher‑consistency \citep{HuberRonchetti2009,Tyler2011}; (ii) we split linear vs.\ residual signal by a small‑ridge fit on $Z$ (standardized inputs) to $y-c_t$, taking $\sigma_{\mathrm{lin},t}^2=\mathrm{Var}(\hat y)$ and $\sigma_{f,t}^2=\max\!\big((1.4826\,\mathrm{median}|r|)^2-\sigma_{n,t}^2,\varepsilon\big)$ with residuals $r=(y-c_t)-\hat y$ and $\varepsilon\!=\!10^{-12}$ \citep{HoerlKennard1970,HuberRonchetti2009}; (iii) we set the length‑scale $\ell_t$ by method‑of‑moments matching: choose a characteristic design spacing $r_{0,t}$ (the median nearest‑neighbor distance in $z$) and solve $\kappa_\nu(r_{0,t}/\ell_t)=\rho$ with a target short‑lag correlation $\rho=0.5$ (a “practical range’’ style calibration; closed form for $\nu=3/2$ gives $\ell_t\approx1.032\,r_{0,t}$) \citep{Cressie1993,Stein1999,DiggleRibeiro2007}. These choices yield a prior that reflects the data scale and geometry while remaining fully predetermined between update rounds.
    \section{EXTRA RESULTS}

\subsection{Regression}\label{app:reg_fig}

With the same model setup in \cref{sec:exp_reg_1d}, \cref{fig:reg_inv} shows the data-gathering procedure when $w(z)=\exp(-z)$, which targets the precision in the low-value region. After a warm-start of exploring potential low value, the target region is secured and more data is acquired around the area.

\subsection{Classification}\label{app:clf_evo_fig}

We present \cref{fig:clf_proportion} to demonstrate the evolution for proportion of acquired class for Vehicle, Landsat and Vowel, respectively. The weighting has a one-to-one correspondence with the class in proportion plot. Curves show cumulative proportion of acquired labels per class for each method (Random, EPIG, $\text{EPIG}_w$) by the mean over 100 runs; shaded regions indicate the SEM range. The class-mix plots confirm that $\text{EPIG}_w$ allocates a larger share of queries to the high-weight classes compared to EPIG and Random.
    \section{RESOURCES}

\subsection{Software}

\begin{table}[h]
    \centering
    \scriptsize
    \begin{tabular}{llll}
        \toprule
        Project        & Citation                         & License            & URL                                             \\
        \midrule
        NumPy          & \citet{harris2020array}          & BSD (3-clause)     & \shorturl{numpy.org}                            \\
        SciPy          & \citet{virtanen2020scipy}        & BSD (3-clause)     & \shorturl{scipy.org}                            \\
        Scikit-learn   & \citet{pedregosa2011sklearn}     & BSD (3-clause)     & \shorturl{scikit-learn.org}                     \\
        Matplotlib     & \citet{hunter2007matplotlib}     & PSF-based          & \shorturl{matplotlib.org}                       \\
        pandas         & \citet{mckinney2010data}         & BSD (3-clause)     & \shorturl{pandas.pydata.org}                    \\
        tqdm           & \citet{dacostaluis2019tqdm}      & MIT                & \shorturl{github.com/tqdm/tqdm}                 \\
        PyTorch        & \citet{paszke2019pytorch}        & BSD (3-clause)     & \shorturl{pytorch.org}                          \\
        GPyTorch       & \citet{gardner2018gpytorch}      & MIT                & \shorturl{gpytorch.ai}                          \\
        h5py           & \citet{h5pydocs2025}             & BSD (3-clause)     & \shorturl{h5py.org}                             \\
        PMLB           & \citet{olson2017pmlb,romano2021pmlb} & MIT            & \shorturl{epistasislab.github.io/pmlb}          \\
        ucimlrepo (UCI client) & \citet{uci2025citation}  & MIT                & \shorturl{github.com/uci-ml-repo/ucimlrepo}     \\
        \bottomrule
    \end{tabular}
    \caption{Third‑party software used in this work.}
    \label{tab:software}
\end{table}

\subsection{Datasets}

\begin{table}[h]
    \centering
    \scriptsize
    \begin{tabular}{lll}
        \toprule
        Dataset & Citation & URL \\
        \midrule
        \textsc{Vehicle} & \cite{statlog_vehicle} & \shorturl{archive.ics.uci.edu/dataset/149/statlog+vehicle+silhouettes} \\
        \textsc{Landsat} & \cite{statlog_landsat} & \shorturl{archive.ics.uci.edu/dataset/146/statlog+landsat+satellite} \\
        \textsc{Vowel} & \cite{connectionist_bench_vowel} & \shorturl{archive.ics.uci.edu/dataset/152/connectionist+bench+vowel+recognition+deterding+data} \\
        \textsc{Slump} & \cite{concrete_slump_test_182} & \shorturl{archive.ics.uci.edu/dataset/182/concrete+slump+test} \\
        \textsc{Yacht} & \cite{yacht_hydrodynamics_243} & \shorturl{archive.ics.uci.edu/dataset/243/yacht+hydrodynamics} \\
        \textsc{Estate} & \cite{real_estate_valuation_477} & \shorturl{archive.ics.uci.edu/dataset/477/real+estate+valuation+data+set} \\
        \bottomrule
    \end{tabular}
    \caption{UCI datasets used in this work. All are available under Creative Commons Attribution 4.0 (CC BY~4.0).}
    \label{tab:uci_datasets}
\end{table}

\subsection{Compute}
All experiments were executed on a single local workstation; no distributed or cloud resources were used. The software environment is listed in Table~\ref{tab:software}, and the datasets are summarized in Table~\ref{tab:uci_datasets}.

\begin{figure}[h]
    \centering
    \includegraphics[width=\linewidth]{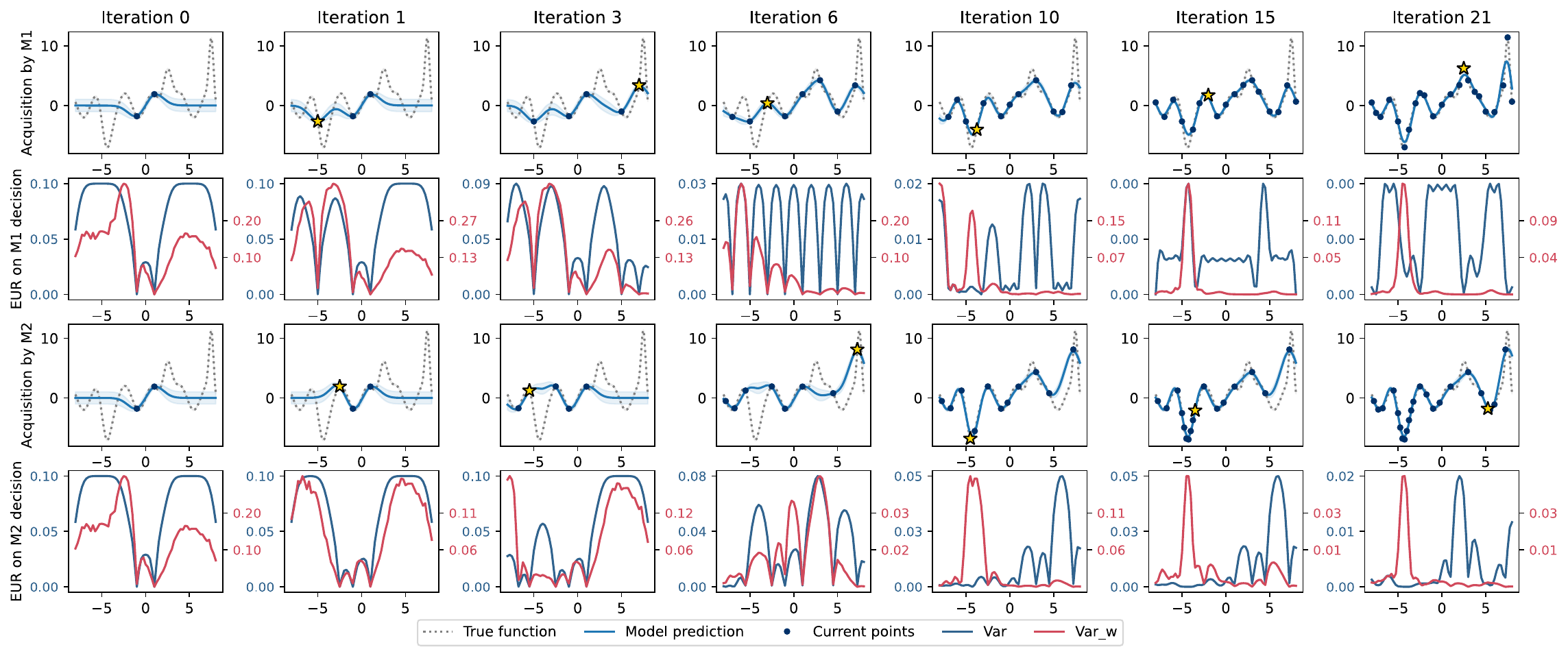}
    \caption{
        Expected variance reduction (\blue{\text{EVR}}) vs.\ a variant with weighting $w(z)=\exp(-z)$ (\red{$\text{EVR}_w$}).
        Row~1: prediction with data acquired by \blue{\text{EVR}}.
        Row~2: values of \blue{\text{EVR}} and \red{$\text{EVR}_w$} given the Row~1 training set (the maximiser is labelled next).
        Row~3: prediction with data acquired by \red{$\text{EVR}_w$}.
        Row~4: values of \blue{\text{EVR}} and \red{$\text{EVR}_w$} given the Row~3 training set.
    }
    \label{fig:reg_inv}
    
\end{figure}

\begin{figure}[h]
    \centering
    \includegraphics[width=0.7\linewidth]{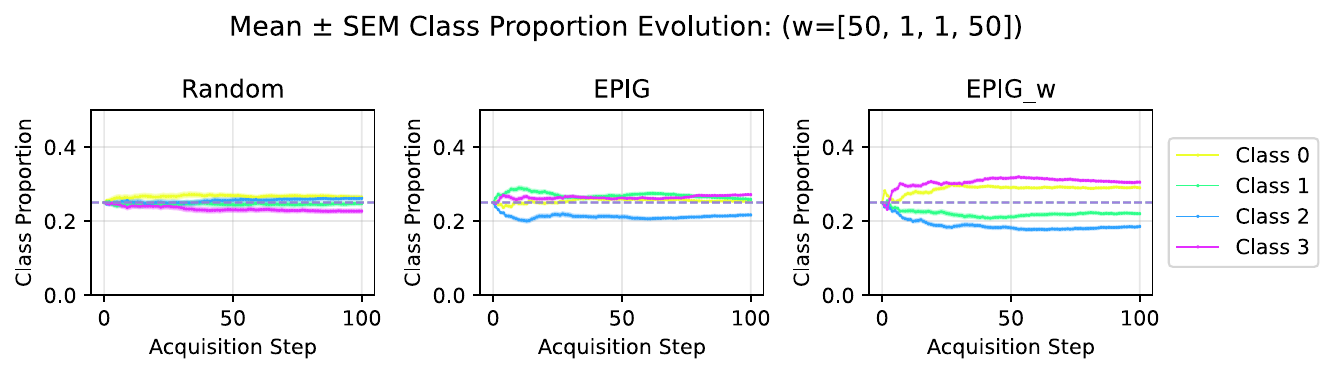}
    \includegraphics[width=0.7\linewidth]{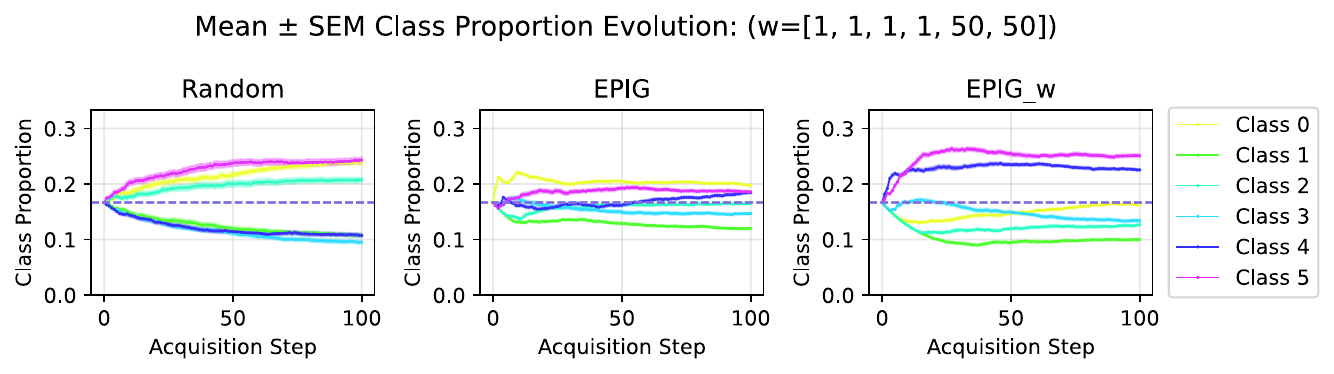}
    \includegraphics[width=0.7\linewidth]{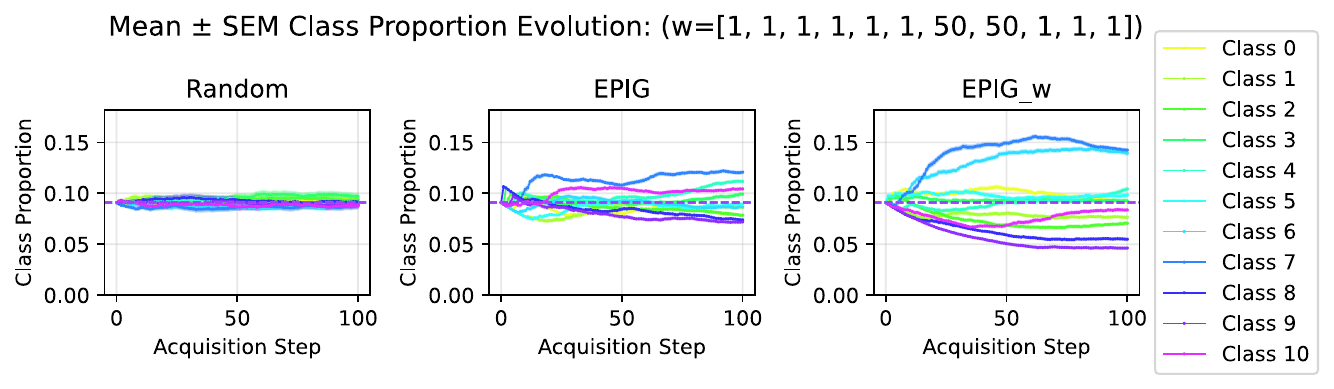}
    \caption{
        From top to bottom: evolution of acquired-class proportions on \textsc{Vehicle} ($w=[50,1,1,50]$), \textsc{Landsat} ($w=[1,1,1,1,50,50]$) and \textsc{Vowel} ($w=[1,1,1,1,1,1,50,50,1,1,1]$).
    }
    \label{fig:clf_proportion}
    \vspace{-70pt}
\end{figure}
\end{document}